\documentclass{article}

% if you need to pass options to natbib, use, e.g.:
%     \PassOptionsToPackage{numbers, compress}{natbib}
% before loading neurips_2022

% ready for submission
% \usepackage{neurips_2022}

% to compile a preprint version, e.g., for submission to arXiv, add add the
% [preprint] option:
%     \usepackage[preprint]{neurips_2022}

% to compile a camera-ready version, add the [final] option, e.g.:
\usepackage[final]{neurips_2023}
% \usepackage[preprint]{neurips_2023}
% \usepackage{neurips_2023}

% to avoid loading the natbib package, add option nonatbib:
% \usepackage[nonatbib]{neurips_2022}

\usepackage[utf8]{inputenc} % allow utf-8 input
\usepackage[T1]{fontenc}    % use 8-bit T1 fonts
\usepackage[colorlinks,linkcolor=blue,anchorcolor=black,citecolor=blue]{hyperref}
\usepackage{url}            % simple URL typesetting
\usepackage{booktabs}       % professional-quality tables
\usepackage{amsfonts}       % blackboard math symbols
\usepackage{nicefrac}       % compact symbols for 1/2, etc.
\usepackage{microtype}      % microtypography
\usepackage{graphicx}
\usepackage[table]{xcolor}
\usepackage{xcolor}         % colors
\definecolor{Gray}{gray}{0.9}
\usepackage{threeparttable}
\usepackage{amssymb}
\usepackage{pifont}

\usepackage{amsmath}
\usepackage{amsthm}
\usepackage{array}
\usepackage{siunitx}
\usepackage{xspace}
\usepackage{longtable}
\usepackage{marvosym}
\usepackage{enumerate}
\usepackage{forest}
\usepackage{mathrsfs}
\usepackage{arydshln}
\usepackage{wrapfig}

\newtheorem{definition}{Definition}
\usepackage{makecell}
\usepackage{thmtools} 
\usepackage{thm-restate}

\usepackage{enumitem}
\usepackage{adjustbox}
\newcommand{\sE}{\mathbb{E}}
\newcommand{\lce}{\ell_{\text{CE}}}
\newcommand{\dg}{domain generalization\xspace}

\newcommand{\etal}{\textit{et al.}\xspace}

\newcommand{\ieno}{\textit{i}.\textit{e}.}
% \newcommand{\eg}{\textit{e}.\textit{g}.~}
 %there is no space
% \newcommand{\etc}{\textit{etc}.}
 %there is no "."

\makeatletter
\newcommand*\bigcdot{\mathpalette\bigcdot@{.5}}
\newcommand*\bigcdot@[2]{\mathbin{\vcenter{\hbox{\scalebox{#2}{$\m@th#1\bullet$}}}}}
\makeatother

\usepackage[capitalize]{cleveref}
\crefname{section}{Sec.}{Secs.}
\Crefname{section}{Section}{Sections}
\Crefname{table}{Table}{Tables}
\crefname{table}{Tab.}{Tabs.}

\title{SimMMDG: A Simple and Effective Framework for Multi-modal Domain Generalization}

% The \author macro works with any number of authors. There are two commands
% used to separate the names and addresses of multiple authors: \And and \AND.
%
% Using \And between authors leaves it to LaTeX to determine where to break the
% lines. Using \AND forces a line break at that point. So, if LaTeX puts 3 of 4
% authors names on the first line, and the last on the second line, try using
% \AND instead of \And before the third author name.

\author{ 
Hao Dong$^{1}$ \ \ \ 
Ismail Nejjar$^{2}$  \ \ \ 
Han Sun$^{2}$ \ \ \ 
Eleni Chatzi$^{1}$  \ \ \ 
Olga Fink$^{2}$ \\[0.5em]
\normalsize{$^{1}$ETH Z\"urich \ \ \ 
$^{2}$EPFL\ \ \ }
\\
\texttt{\{hao.dong, chatzi\}@ibk.baug.ethz.ch, \{ismail.nejjar, han.sun, olga.fink\}@epfl.ch}
}

\begin{document}

\maketitle

\begin{abstract}
In real-world scenarios, achieving domain generalization (DG) presents significant challenges as models are required to generalize to unknown target distributions. Generalizing to unseen multi-modal distributions poses even greater difficulties due to the distinct properties exhibited by different modalities. To overcome the challenges of achieving domain generalization in multi-modal scenarios, we propose \textit{SimMMDG}, a simple yet effective multi-modal DG framework. 
We argue that mapping features from different modalities into the same embedding space impedes model generalization.
To address this, we propose splitting the features within each modality into modality-specific and modality-shared components. We employ supervised contrastive learning on the modality-shared features to ensure they possess joint properties and impose distance constraints on modality-specific features to promote diversity.
In addition, we introduce a cross-modal translation module to regularize the learned features, which can also be used for missing-modality generalization. We demonstrate that our framework is theoretically well-supported and achieves strong performance in multi-modal DG on the EPIC-Kitchens dataset and the novel Human-Animal-Cartoon (HAC) dataset introduced in this paper. Our source code and HAC dataset are available at \href{https://github.com/donghao51/SimMMDG}{https://github.com/donghao51/SimMMDG}.
\end{abstract}

\section{Introduction}
Domain generalization (DG) has received significant attention in the research community~\cite{9782500}. In real-world scenarios such as autonomous driving~\cite{chen2017multi,Geiger2012CVPR}, robotics~\cite{dong2023jras}, action recognition~\cite{Damen2018EPICKITCHENS}, and fault diagnosis~\cite{FINK2020103678}, it is crucial that models trained on limited source domains perform well across novel target domains. To tackle distribution shift problems, numerous DG algorithms have been proposed, including domain-invariant feature learning~\cite{pmlr-v28-muandet13}, feature disentanglement~\cite{piratla2020efficient}, data augmentation~\cite{zhang2018mixup}, and meta-learning~\cite{li2018learning}. However, most of these algorithms are designed for unimodal data, such as images~\cite{li2017deeper} and time series data~\cite{oodrl}. More recently, the emergence of large-scale multi-modal datasets~\cite{Damen2018EPICKITCHENS, nuscenes2019} has highlighted the need to address multi-modal DG across multiple modalities, such as audio-video~\cite{Kazakos_2019_ICCV,ZhangCVPR2022}, image-language~\cite{clip,jia2021scaling}, and LiDAR-camera~\cite{Liang_2018_ECCV,SuperFusion}. However, to date, only RNA-Net~\cite{Planamente_2022_WACV} has focused on the multi-modal DG problem, with a relative norm alignment loss proposed to balance the feature norms of audio and video.

Multi-modal DG is closely related to multi-modal representation learning~\cite{NEURIPS2022_702f4db7}. Traditional multi-modal contrastive learning frameworks~\cite{clip,ALBEF} aim to project the features of different modalities into a common embedding space. 
However, this approach may not be optimal as different modalities consist of both shared information that is consistent across modalities and unique information that is specific to each modality. Consequently, attempting to align all modalities together can be challenging and impractical. For instance, consider descriptions of the same object that use entirely different modalities, such as video, audio, and optical flow, as illustrated in~\cref{fig:illus} (a). All modalities share some information since they describe the same object, but each modality further provides modality-specific information. For example, videos typically convey visual appearance information and sequential relations between frames, while audio provides tone information that is closely linked to sentiment and frequency information for different sounds. Optical flow, on the other hand, offers more intuitive information on motions and associated velocities. Therefore, simply mapping features from different modalities into the same embedding space would preserve only modality-shared information while overlooking modality-specific details. This could result in the loss of modality-specific information and a decline in downstream task performance.

Effective multi-modal DG frameworks that can address the challenges outlined above are urgently needed. To this end, we propose \textit{SimMMDG} - a \textbf{Sim}ple yet effective \textbf{M}ulti-\textbf{M}odal \textbf{DG} framework that can be applied to two or more modalities. We first propose to split the feature embedding of each modality into modality-specific and modality-shared parts, ensuring that complementary information from different modalities is retained. We apply supervised contrastive learning on modality-shared features and incorporate distance constraints for modality-specific features to ensure that the learned representations are informative. Moreover, we introduce a cross-modal translation module to further regularize the learned features and facilitate missing-modality generalization. Some theoretical insights from both the multi-modal representation learning perspective and the domain generalization perspective are provided to demonstrate that our approach is well-motivated in theory. Finally, we release a novel \textbf{H}uman-\textbf{A}nimal-\textbf{C}artoon (HAC) dataset, to promote research in multi-modal domain generalization, as shown in~\cref{fig:illus} (b). Our contributions can be summarized as:
\begin{itemize}[leftmargin=*]
\setlength\itemsep{0em}
\item We propose a universal framework for multi-modal domain generalization that demonstrates both simplicity and effectiveness and outperforms existing methods on several challenging datasets. We also demonstrate its efficacy in general multi-modal classification setups.
\item We address the missing-modality generalization problem and propose a cross-modal translation module as a solution, which is robust even in scenarios where multiple modalities are missing.
\item We provide theoretical insights in support of the efficacy of our proposed approach.
\item We introduce a new multi-modal dataset that includes three modalities and large domain shifts, which can serve as a challenging benchmark for future research on multi-modal DG.
\end{itemize}

\begin{figure}[t]
  \centering  \includegraphics[width=0.91\linewidth]{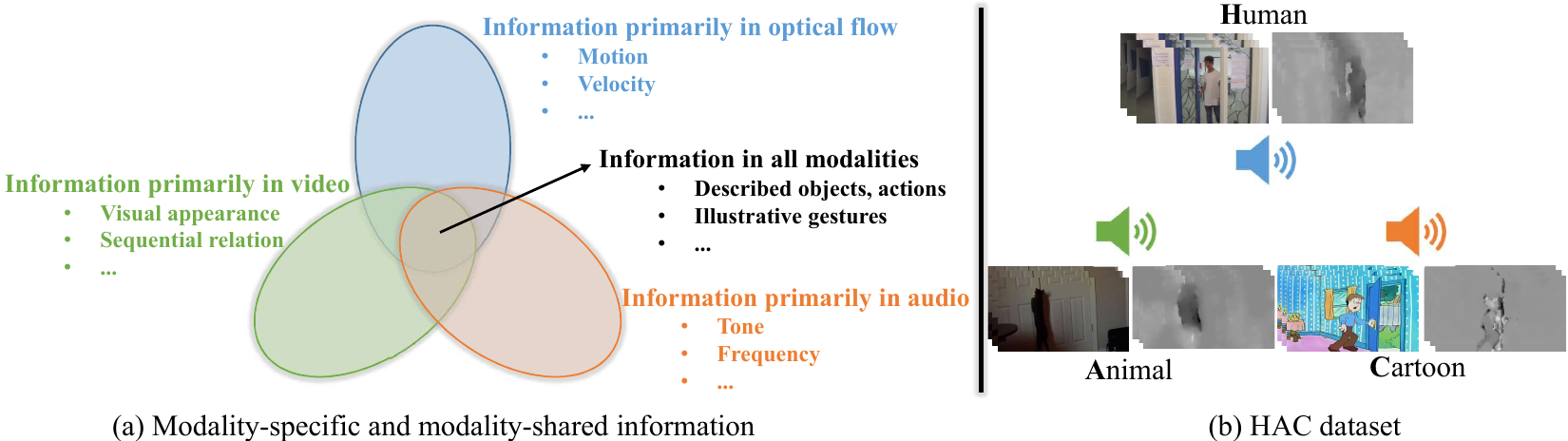}
   \caption{(a). Different modalities possess shared information, while simultaneously containing unique information exclusive to each modality. Inspired by this, we propose to split the feature of each modality into modality-specific and modality-shared parts in our framework. (b) Our new multi-modal DG dataset consists of three domains and three modalities. For each domain, the actor of different actions (opening door in this case) could be either a human, animal, or cartoon figure.}
   \label{fig:illus}
\end{figure}

\section{Related Work}
\noindent\textbf{Domain Generalization} refers to the process of training a model on multiple source domains in order to generalize to unseen target domains. This task is more challenging than domain adaptation (DA)~\cite{wang2018deep} as target domain data cannot be accessed during training. Prior research has identified three main categories of domain generalization methods~\cite{9782500}, including data manipulation, representation learning, and learning strategies. Data manipulation techniques aim to improve generalization performance by increasing the diversity of training data. For example, Tobin \etal~\cite{8202133} use simulated environments for additional data generation, while Zhou \etal~\cite{Zhou_Yang_Hospedales_Xiang_2020} train a domain transformation network using adversarial training to synthesize data from previously unseen domains. Mixup~\cite{zhang2018mixup} generates new instances by performing linear interpolations between data and labels.
Representation learning methods seek to learn domain-invariant representations through the use of domain-adversarial neural networks~\cite{ganin2016domain,li2018domain}, explicit feature distribution alignment~\cite{tzeng2014deep}, and instance normalization~\cite{pan2018two}. Other approaches utilize learning strategies to improve the generalization performance. Examples of such approaches include meta-learning~\cite{li2018learning}, gradient operation~\cite{huang2020self}, and self-supervised learning~\cite{carlucci2019domain}.

\noindent\textbf{Multi-modal Representation Learning} aims to learn robust representations through two or more modalities, which can then be applied to various downstream tasks. While unified models~\cite{wang2022ofa,bao2022vlmo} tokenize various input modalities into sequences~\cite{bao2021beit} and employ a single Transformer~\cite{46201} for joint learning, methods such as CLIP~\cite{clip} and ALIGN~\cite{jia2021scaling} use separate encoders for each modality and leverage contrastive loss to align features. However, these methods align features from different modalities into the same embedding space and may only preserve modality-shared information. In contrast, we propose splitting the features of each modality into modality-specific and modality-shared components to ensure that complementary information from different modalities is preserved. A recent work~\cite{mmrl} also proposes to split the features into different components in multi-modal representation learning and provide an information-theoretical analysis. However, they only deal with two modalities and without considering missing-modality cases. They also don't use the label information within a batch in contrastive learning.

\noindent\textbf{Multi-modal DA and DG.}
Several approaches exist for multi-modal DA. For instance, Munro and Damen~\cite{Munro_2020_CVPR} propose a self-supervised alignment approach along with adversarial alignment for multi-modal DA, while Kim \etal~\cite{kim2021learning} leverage cross-modal contrastive learning to align cross-modal and cross-domain representations. Zhang \etal~\cite{ZhangCVPR2022} propose an audio-adaptive encoder and an audio-infused recognizer to address domain shifts. Notably, RNA-Net~\cite{Planamente_2022_WACV} is the only method known to address multi-modal DG problem, by introducing a relative norm alignment loss to balance audio and video feature norms.

\section{Methodology}
\subsection{Problem Setting: Multi-modal Domain Generalization}
We commence by presenting the definition of the multi-modal domain generalization problem based on the definition of unimodal DG, as described in~\cite{9782500}. Let $\mathbf{X}$ represent  a nonempty input space, and $\mathcal{Y}$ denote an output space. A domain, denoted as $\mathcal{D}$,  consists of data sampled from a distribution, $\mathcal{D} = \{(\mathbf{x}_j, y_j)\}^{n}_{j=1} \sim P_{XY}$, where $P_{XY}$ denotes the joint distribution of  input samples and output labels. $X$ and $Y$ represent the corresponding random variables.

\begin{definition}[Multi-modal domain generalization]
\label{def-dg}
In multi-modal \dg, we are given $D$ training domains $\mathcal{D}_{train}=\{\mathcal{D}^i \mid i=1,\cdots,D\}$, where $\mathcal{D}^i = \{(\mathbf{x}^i_j, y^i_j)\}^{n_i}_{j=1}$ denotes the $i$-th domain with ${n_i}$ data instances. Each data instance $\mathbf{x}^i_j=\{(\mathbf{x}^i_j)_k \mid k=1,\cdots,M\} \in \mathbf{X}$ is comprised of $M$ different modalities and $y^i_j \in \mathcal{Y} \subset \mathbb{R}$ denotes the label.
The joint distributions between each pair of domains are different: $P^i_{XY} \ne P^j_{XY}, 1 \le i \ne j \le D$.
The goal of multi-modal \dg is to learn a robust and generalizable predictive function $f: \mathbf{X} \to \mathcal{Y}$ from $D$ training domains and $M$ data modalities to achieve a minimum prediction error on an \emph{unseen} test domain $\mathcal{D}_{test}$ (\ieno, $\mathcal{D}_{test}$ cannot be accessed during training and $P^{test}_{XY} \ne P^i_{XY} \text{ for }i \in \{1,\cdots,D\}$):
\begin{equation}
    \min_{f} \, \mathbb{E}_{(\mathbf{x},y) \in \mathcal{D}_{test}} [ \ell(f(\mathbf{x}),y) ],
\end{equation}
where $\mathbb{E}$ is the expectation and $\ell(\cdot, \cdot)$ is the loss function.
\end{definition}

\subsection{SimMMDG}

\begin{figure}[t]
  \centering  \includegraphics[width=0.95\linewidth]{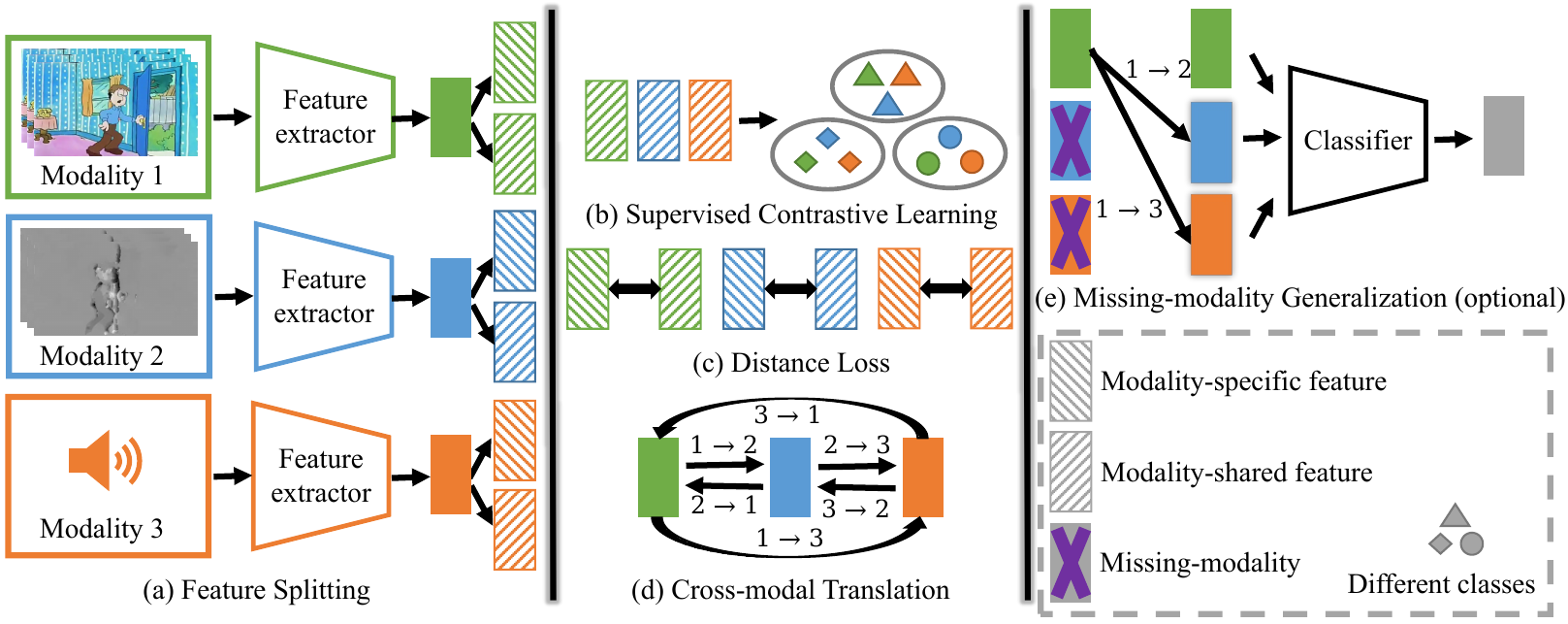}
   \caption{Overview of \textit{SimMMDG}. We split the features of each modality into modality-specific and modality-shared parts. For the modality-shared part, we use supervised contrastive learning to map the features with the same label to be as close as possible. For modality-specific features, we use a distance loss to encourage them to be far from modality-shared features, promoting diversity within each modality. Additionally, we introduce a cross-modal translation module that regularizes features and enhances generalization across missing modalities.}
   \label{fig:SimMMDG}
\end{figure}

We present the \textit{SimMMDG} framework for addressing the multi-modal DG problem, as depicted in~\cref{fig:SimMMDG}. Our approach involves splitting the feature embedding of each modality into modality-specific and modality-shared components. We aim to map modality-shared embeddings of data samples with the same label, whether they belong to the same or different modalities, to be as close as possible (and vice versa for samples with different labels). For modality-specific features within each modality,  our objective is to maximize the distance from their modality-shared features to capture unique and complementary information. Additionally, we incorporate a cross-modal translation module to regularize learned features and address the missing-modality generalization problem, which is essential for real-world testing scenarios where one or more modalities may be absent due to unpredictable factors.

%By splitting the feature embedding of each modality and incorporating a cross-modal translation module, our proposed framework \textit{SimMMDG} not only improves the generalization performance of multi-modal domain adaptation and generalization, but also enables better utilization of the unique and complementary information carried by each modality.

\subsubsection{Within-modal Feature Splitting}
Traditional multi-modal learning frameworks~\cite{clip,ALBEF} aim to map features from various modalities into a common embedding space using contrastive learning. However, this approach may not be optimal because different modalities often contain a mix of both homogeneous and heterogeneous information, making it challenging and impractical to align them all together. For instance, consider the language and visual modality~\cite{8269806}. Both modalities can describe people, objects, actions, and gestures, which reflect shared information between them. However, images can provide additional information such as texture, depth, and visual appearance that are not available in language data. Similarly, language can provide syntactic structure, vocabulary, and morphology, which are not present in image data. These are specific pieces of information that are unique to each modality. 

Thus, the simple mapping of features from different modalities into the same embedding space may result in the loss of modality-specific information and consequently lead to decreased performance in downstream tasks. Guided by this intuition, we propose a novel approach that involves the separation of the feature embedding of each modality into modality-specific and modality-shared components. For example, given a unimodal embedding $\mathbf{E}$, we denote it as $\mathbf{E} = [\mathbf{E}_s; \mathbf{E}_c]$, where $\mathbf{E}_s$ is modality-specific feature and $\mathbf{E}_c$ is modality-shared feature. 

\noindent\textbf{Multi-modal Supervised Contrastive Learning.} We expect modality-shared features $\mathbf{E}_c$ to possess characteristics that are shared among distinct modalities. If data instances from different modalities have the same label, we expect their modality-shared features to be as close as possible in the embedding space. To achieve this objective, we leverage the availability of labels for each data instance and employ supervised contrastive loss~\cite{NEURIPS2020_d89a66c7} for effective guidance. For a set of $N$ randomly sampled label pairs, $\{\boldsymbol{x}_j,y_j\}_{j=1, ..., N}$, the corresponding batch used for training consists of $M \times N$ pairs, $\{\boldsymbol{\tilde{x}}_k,\tilde{y}_k\}_{k=1, ..., M \times N}$, where $\boldsymbol{\tilde{x}}_{M \times j}$, $\boldsymbol{\tilde{x}}_{M \times j-1}$, ... , $\boldsymbol{\tilde{x}}_{M \times j-M+1}$ are data instances from $M$ different modalities in $\boldsymbol{x}_j$ ($j=1, ..., N$) and $\tilde{y}_{M \times j}=...=\tilde{y}_{M \times j-M+1}=y_j$. 
%We refer to a set of $N$ samples as a "batch" and the set of $M \times N$ multi-modal samples as a "multi-modal batch" consisting of $M$ different modalities.

Let $i\in I\equiv\{1, ..., M \times N\}$ be the index of an arbitrary unimodal sample within a batch. We define $A(i)\equiv I\setminus\{i\}$, $P(i)\equiv\{p\in A(i):{\tilde{y}}_p={\tilde{y}}_i\}$ as the set of indices of all positive samples in the batch which share the same label as $i$. The cardinality of $P(i)$ is denoted as $|P(i)|$. Then, the multi-modal supervised contrastive learning loss can be written as follows: 

\begin{equation}
  \mathcal{L}_{con}
  =\sum_{i\in I}\frac{-1}{|P(i)|}\sum_{p\in P(i)}\log{\frac{\text{exp}\left(\boldsymbol{z}_i\bigcdot\boldsymbol{z}_p/\tau\right)}{\sum\limits_{a\in A(i)}\text{exp}\left(\boldsymbol{z}_i\bigcdot\boldsymbol{z}_a/\tau\right)}} ,
  \label{eqn:supervised_loss}
\end{equation}
with $\boldsymbol{z}_k=Proj(g(\boldsymbol{\tilde{x}}_k))\in\mathcal{R}^{D_P}$, where $g(\cdot)$ is the feature extractor, that maps $\boldsymbol{x}$ to modality-specific and modality-shared features, $\mathbf{E} = [\mathbf{E}_s; \mathbf{E}_c] = g(\boldsymbol{x})$, where $\mathbf{E}_s, \mathbf{E}_c \in R^{D_E}$
%$\mathbf{E}_c=g(\boldsymbol{x})\in\mathcal{R}^{D_E}$
, and $Proj(\cdot)$ is the projection network that maps $\mathbf{E}_c$ to a vector $\boldsymbol{z}=Proj(\boldsymbol{\mathbf{E}_c})\in\mathcal{R}^{D_P}$.
The inner product between two projected feature vectors is denoted by $\bigcdot$, and $\tau\in\mathcal{R}^+$ is a scalar temperature parameter.

\noindent\textbf{Feature Splitting with Distance.} To ensure that the modality-specific features $\mathbf{E}_s$ carry unique and complementary information, we aim to maximize their dissimilarity from the corresponding modality-shared features $\mathbf{E}_c$. To achieve this goal, we utilize negative $\ell_2$ distance and formulate a distance loss on $\mathbf{E}_s$ and $\mathbf{E}_c$ as:
\begin{equation}
  \mathcal{L}_{dis}
  =\frac{-1}{M} \sum_{i=1}^{M} ||\mathbf{E}_s^{i} - \mathbf{E}_c^{i}||_2^2,
  \label{eqn:dis_loss}
\end{equation}
where $M$ is the number of modalities, $\mathbf{E}_s^{i}$ and $\mathbf{E}_c^{i}$ are the modality-specific and modality-shared features of the $i$-th modality.

\subsubsection{Cross-modal Translation}
Simply increasing the distance between $\mathbf{E}_s$ and $\mathbf{E}_c$ may not yield optimal results. The proposed cross-modal translation module aims to ensure the meaningfulness of modality-specific features by exploiting the implicit relationships and approximate translation mappings that exist between the $M$ modalities within the same data instance. This is achieved through a multi-layer perceptron (MLP)~\cite{hastie01statisticallearning} that translates the feature embedding $\mathbf{E}$ across modalities. For instance, $\mathbf{E}^{j}_{t} = MLP_{\mathbf{E}^{i} \rightarrow \mathbf{E}^{j}}(\mathbf{E}^{i})$ implies that we translate the embedding $\mathbf{E}^{i} = [\mathbf{E}^{i}_s; \mathbf{E}^{i}_c]$ of the $i$-th modality to the $j$-th modality using $MLP_{\mathbf{E}^{i} \rightarrow \mathbf{E}^{j}}$, resulting in the translated  embedding $\mathbf{E}^{j}_{t}$. More intuitions behind the cross-modal translation module are discussed in the appendix. To ensure that the translated embedding $\mathbf{E}^{j}_{t}$ is a meaningful representation of the $j$-th modality, we aim to minimize its $\ell_2$ distance  from the real embedding of the $j$-th modality, $\mathbf{E}^{j}$ and the cross-modal translation loss is defined as:
\begin{equation}
  \mathcal{L}_{trans}
  =\frac{1}{M(M-1)} \sum_{i=1}^{M} \sum_{j \neq i} ||MLP_{\mathbf{E}^{i} \rightarrow \mathbf{E}^{j}}(\mathbf{E}^{i}) - \mathbf{E}^{j}||_2^2 .
  \label{eqn:trans_loss}
\end{equation}

\subsection{Final Loss}
The final loss is obtained as the weighted sum of the previously defined losses:
\begin{equation}
  \mathcal{L}
  =\mathcal{L}_{cls} + \alpha_{con} \mathcal{L}_{con}+ \alpha_{dis} \mathcal{L}_{dis}+ \alpha_{trans} \mathcal{L}_{trans} ,
  \label{eqn:l_all}
\end{equation}
where $\mathcal{L}_{cls}$ is the cross-entropy loss for classification, and where $\alpha_{\text{con}}$, $\alpha_{\text{dis}}$, and $\alpha_{\text{trans}}$ are hyperparameters that control the relative importance of the contrastive learning, dissimilarity, and cross-modal translation terms, respectively.

\subsection{Missing-modality Generalization}
\label{missing-modal}
During the training phase, we have access to data from all modalities. However, in the testing phase, one or more modalities may be absent due to unpredictable reasons, such as sensor malfunction or loss of communication. In such cases, the system's robustness toward potential missing modalities becomes critical. To address the missing-modality scenario, we utilize our cross-modal translation module. In normal circumstances, the output of the network is defined as:
\begin{equation}
  y = f(x) = h(g(x)) = h([\mathbf{E}^{1}, ..., \mathbf{E}^{i}, ..., \mathbf{E}^{M}]) ,
  \label{eqn:pred}
\end{equation}
where $g(\cdot)$ is the feature extractor and $h(\cdot)$ is the classifier. If the $i$-th modality is missing during testing, one solution is to replace its embedding with zero. The output is then given as: 
\begin{equation}
  y = h([\mathbf{E}^{1}, ..., \mathbf{0}, ..., \mathbf{E}^{M}]) .
  \label{eqn:pred1}
\end{equation}
However, simply replacing the embeddings with null entries may have an adverse effect on the network's performance. To address this issue, we propose using our cross-modal translation module to predict and substitute the missing modality's embedding with information from available modalities, resulting in a more robust output for the network. The output of the network is then given as: 
\begin{equation}
  y = h([\mathbf{E}^{1}, ..., \mathbf{E}^{i}_t, ..., \mathbf{E}^{M}]) ,
  \label{eqn:pred2}
\end{equation}
where 
\begin{equation}
  \mathbf{E}^{i}_t = \frac{1}{M-1} \sum_{j \neq i}^{M} MLP_{\mathbf{E}^{j} \rightarrow \mathbf{E}^{i}}(\mathbf{E}^{j}) .
  \label{eqn:pred3}
\end{equation}
In cases where multiple modalities are missing during testing, we can use the same strategy to utilize available modalities to predict and substitute the missing ones, similar to~\cref{eqn:pred3}. The benefits of our approach are demonstrated in the subsequent experiments.

\section{Theoretical Insights}
%In this section, we provide some insights to support the validity of our approach.

\subsection{Multi-modal Representation Learning Perspective}
We first generalize Theorem 3.1 in~\cite{mmrl} to the case of $M$ modalities. Let $p$ be the joint distribution of $(x_1, ..., x_M, y)$, where $x_i$ is the $i$-th modality and $y$ is the target.
We define the \emph{information gap} as $\Delta_p:= \max\{I(x_i; y); i=1, ..., M\} - \min\{I(x_i; y); i=1, ..., M\}$, where $I(x_i; y)$ is the mutual information between modality $x_i$ and the target variable $y$.  The information gap serves to characterize the effectiveness of the modalities in predicting the target $y$. We use the cross-entropy loss, denoted by $\lce$, and the prediction function, denoted by $f$.

\begin{restatable}{theorem}{mgap}
%\begin{thm}
\label{thm:gap}
For $M$ feature extractors $g^i(\cdot)$ ($i=1, ..., M$), if the multi-modal features $\mathbf{E}^{i} = g^i(x_i)$ are perfectly aligned in the feature space, i.e., $\mathbf{E}^{1} = ... = \mathbf{E}^{i} = ... = \mathbf{E}^{M}$, then $\inf_{f}\sE_p[\lce(f(\mathbf{E}^{1}, ..., \mathbf{E}^{M}), y)] - \inf_{f'}\sE_p[\lce(f'(x_{1}, ..., x_{M}), y)] \geq \Delta_p$.
\end{restatable}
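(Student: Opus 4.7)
The plan is to reduce both infima on the left-hand side to conditional entropies via the standard identity $\inf_f \sE_p[\lce(f(Z), y)] = H(y \mid Z)$, which holds because the Bayes-optimal soft classifier is the true posterior $p(y \mid Z)$ and the induced expected cross-entropy is $H(y \mid Z)$. Applied to the full-modality term this immediately gives $\inf_{f'}\sE_p[\lce(f'(x_1, \ldots, x_M), y)] = H(y \mid x_1, \ldots, x_M)$. For the aligned-features term, the hypothesis $\mathbf{E}^1 = \cdots = \mathbf{E}^M =: \mathbf{E}$ makes the input tuple informationally equivalent to the single random variable $\mathbf{E}$, so $\inf_f \sE_p[\lce(f(\mathbf{E}^1, \ldots, \mathbf{E}^M), y)] = H(y \mid \mathbf{E})$.

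Next, I would rewrite the difference using $H(y \mid Z) = H(y) - I(y; Z)$, which cancels the marginal entropy and turns the target gap into
\begin{equation*}
H(y \mid \mathbf{E}) - H(y \mid x_1, \ldots, x_M) \;=\; I(y; x_1, \ldots, x_M) \;-\; I(y; \mathbf{E}).
\end{equation*}
The two terms can now be bounded independently. For the positive term, expanding via the chain rule gives $I(y; x_1, \ldots, x_M) = I(y; x_i) + I(y; x_{-i} \mid x_i) \geq I(y; x_i)$ for every $i$, so $I(y; x_1, \ldots, x_M) \geq \max_i I(y; x_i)$. For the subtracted term, the alignment assumption provides $\mathbf{E} = g^i(x_i)$ for each $i$, so the data processing inequality yields $I(y; \mathbf{E}) \leq I(y; x_i)$ for every $i$, whence $I(y; \mathbf{E}) \leq \min_i I(y; x_i)$.

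Combining the two bounds gives $I(y; x_1, \ldots, x_M) - I(y; \mathbf{E}) \geq \max_i I(y; x_i) - \min_i I(y; x_i) = \Delta_p$, which is exactly the claim. The main conceptual step (and the source of any generalization beyond~\cite{mmrl}) is the observation that perfect alignment forces the shared representation to be a deterministic function of \emph{each individual} modality, which via DPI caps its informativeness about $y$ at that of the \emph{weakest} modality, while the full unprocessed tuple is always at least as informative as the \emph{strongest}. I do not anticipate serious technical obstacles: the only subtlety is ensuring the Bayes-optimality identification for cross-entropy holds in the setting (which requires mild regularity on $f$ ranging over probability simplex outputs), and verifying that the two classical information-theoretic inequalities used apply as stated once the alignment hypothesis is invoked.
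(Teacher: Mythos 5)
Your proposal is correct and follows essentially the same route as the paper's proof: both reduce the two infima to conditional entropies via the variational form of cross-entropy, bound $I(y;x_1,\ldots,x_M)$ below by $\max_i I(y;x_i)$ via the chain rule, and use the data processing inequality applied to each modality separately to bound the aligned representation's information above by $\min_i I(y;x_i)$. The only cosmetic difference is that the paper establishes the equivalence of the aligned tuple to a single variable by explicitly expanding $I(\mathbf{E}^{1},\ldots,\mathbf{E}^{M};y)$ with the chain rule and noting the conditional terms vanish, whereas you assert it directly.
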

%\end{thm}

%Our approach is theoretically motivated, as shown by the generalization of Theorem 3.1 in~\cite{mmrl} to the $M$ modalities case. 
The proof of this theorem can be found in the appendix. \cref{thm:gap} indicates that the optimal prediction error, achieved with perfectly aligned features, is at least $\Delta_p$ larger than when using the raw input modalities. In practical scenarios where one modality is less informative for prediction, the information gap $\Delta_p$ tends to be substantial. Consequently, this leads to a notable increase in prediction errors in downstream tasks. Furthermore, achieving perfect modality alignment enforces the aligned features to exclusively contain predictive information present in all input modalities,  potentially causing the loss of modality-specific information. By splitting the feature embedding for each modality, our model has access to both modality-shared features with predictive information present in both modalities, and modality-specific features that contain predictive information unique to each individual modality. This enables our model to effectively capture the distinct information provided by each modality, while simultaneously leveraging the shared information across modalities. Consequently, our approach has the potential to enhance generalization capabilities and achieve better performance on downstream tasks.

\subsection{Domain Generalization Perspective}

%\begin{proposition}
\begin{restatable}{theorem}{mgap2} 
\label{prop:hdist}
Let $\mathcal{X}$ be a space and $\mathcal{H}$ be a class of hypotheses corresponding to this space. Let $\mathbb{Q}$ and the collection $\{\mathbb{P}_i \}_{i=1}^K$ be distributions over $\mathcal{X}$ and let $\{\varphi_i \}_{i=1}^K$ be a collection of non-negative coefficient with $\sum_i \varphi_i = 1$. Let $\mathcal{O}$ be a set of distributions s.t. $\forall \mathbb{S}\in \mathcal{O}$, the following holds
\begin{equation}
    \label{eqa:range}
     \sum_i \varphi_i d_{\mathcal{H}\Delta \mathcal{H}}(\mathbb{P}_i, \mathbb{S}) \leq \max_{i,j} d_{\mathcal{H}\Delta \mathcal{H}}(\mathbb{P}_i,\mathbb{P}_j).
\end{equation}
Then, for any $h\in \mathcal{H}$, 
\begin{equation}
    \label{eqa:bound}
    \varepsilon_{\mathbb{Q}}(h)\leq \lambda_\varphi + \sum_i \varphi_i \varepsilon_{\mathbb{P}_i}(h) + \frac{1}{2}\min_{\mathbb{S}\in\mathcal{O}}  d_{\mathcal{H}\Delta \mathcal{H}}(\mathbb{S}, \mathbb{Q}) + \frac{1}{2}\max_{i,j} d_{\mathcal{H}\Delta \mathcal{H}}(\mathbb{P}_i, \mathbb{P}_j),
\end{equation}
where $\lambda_\varphi$ is the error of an ideal joint hypothesis,
$\varepsilon_{\mathbb{P}}(h)$ is the error for a hypothesis $h$ on a distribution $\mathbb{P}$, and
$d_{\mathcal{H}\Delta \mathcal{H}} (\mathbb{P}, \mathbb{Q})$ is $\mathcal{H}$-divergence which measures differences in distribution~\cite{ben2010theory}. 
%\end{proposition}
\end{restatable}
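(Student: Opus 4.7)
The plan is to reduce the claim to the convex-combination extension of the classical Ben-David et al.\ two-domain bound, and then insert an auxiliary distribution $\mathbb{S}\in\mathcal{O}$ via the triangle inequality for the $\mathcal{H}\Delta\mathcal{H}$-divergence, using the defining property of $\mathcal{O}$ (the inequality \eqref{eqa:range}) to absorb the weighted sum $\sum_i \varphi_i d_{\mathcal{H}\Delta \mathcal{H}}(\mathbb{P}_i,\mathbb{S})$ into the $\max_{i,j} d_{\mathcal{H}\Delta \mathcal{H}}(\mathbb{P}_i,\mathbb{P}_j)$ term that already appears on the right-hand side of \eqref{eqa:bound}.

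First I would establish (or cite) the multi-source bound: for every $h\in\mathcal{H}$,
\begin{equation*}
    \varepsilon_{\mathbb{Q}}(h) \leq \lambda_\varphi + \sum_i \varphi_i \varepsilon_{\mathbb{P}_i}(h) + \tfrac{1}{2}\sum_i \varphi_i d_{\mathcal{H}\Delta\mathcal{H}}(\mathbb{P}_i, \mathbb{Q}),
\end{equation*}
where $\lambda_\varphi = \varepsilon_{\mathbb{Q}}(h^*) + \sum_i \varphi_i \varepsilon_{\mathbb{P}_i}(h^*)$ for an ideal joint hypothesis $h^* \in \arg\min_{h} \{\varepsilon_{\mathbb{Q}}(h) + \sum_i \varphi_i \varepsilon_{\mathbb{P}_i}(h)\}$. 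This is obtained by writing $\varepsilon_{\mathbb{Q}}(h) = \sum_i \varphi_i \varepsilon_{\mathbb{Q}}(h)$ (using $\sum_i \varphi_i = 1$), applying the single-source Ben-David bound to each pair $(\mathbb{P}_i, \mathbb{Q})$, and then collecting the source-specific ideal terms into $\lambda_\varphi$. Next, since $d_{\mathcal{H}\Delta\mathcal{H}}$ is a pseudometric on distributions, the triangle inequality $d_{\mathcal{H}\Delta\mathcal{H}}(\mathbb{P}_i, \mathbb{Q}) \leq d_{\mathcal{H}\Delta\mathcal{H}}(\mathbb{P}_i, \mathbb{S}) + d_{\mathcal{H}\Delta\mathcal{H}}(\mathbb{S}, \mathbb{Q})$ holds for arbitrary $\mathbb{S}$. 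Multiplying by $\varphi_i$ and summing yields
\begin{equation*}
    \sum_i \varphi_i\, d_{\mathcal{H}\Delta\mathcal{H}}(\mathbb{P}_i, \mathbb{Q}) \leq \sum_i \varphi_i\, d_{\mathcal{H}\Delta\mathcal{H}}(\mathbb{P}_i, \mathbb{S}) + d_{\mathcal{H}\Delta\mathcal{H}}(\mathbb{S}, \mathbb{Q}).
\end{equation*}
Restricting to $\mathbb{S}\in\mathcal{O}$ and invoking \eqref{eqa:range} bounds the first term on the right by $\max_{i,j} d_{\mathcal{H}\Delta\mathcal{H}}(\mathbb{P}_i, \mathbb{P}_j)$. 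Since the resulting inequality holds for every $\mathbb{S}\in\mathcal{O}$, I can pass to the infimum over $\mathcal{O}$ in $d_{\mathcal{H}\Delta\mathcal{H}}(\mathbb{S},\mathbb{Q})$. Substituting back into the multi-source bound and carrying the $\tfrac{1}{2}$ factor through produces exactly \eqref{eqa:bound}.

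The main obstacle is the first step: one must be careful that the ideal-joint-hypothesis error can be written uniformly as $\lambda_\varphi$ rather than as a weighted combination of pairwise quantities $\lambda_i = \min_h\{\varepsilon_{\mathbb{P}_i}(h)+\varepsilon_{\mathbb{Q}}(h)\}$. This is resolved by choosing $h^*$ to minimize the \emph{joint} objective $\varepsilon_{\mathbb{Q}}(h) + \sum_i \varphi_i \varepsilon_{\mathbb{P}_i}(h)$, which yields $\sum_i \varphi_i \lambda_i \leq \lambda_\varphi$ and hence preserves the direction of the inequality. The remaining steps are a clean application of the pseudometric property of $d_{\mathcal{H}\Delta\mathcal{H}}$ together with a straightforward minimization; one should also note that the statement implicitly assumes $\mathcal{O}$ is nonempty and that the infimum is attained, otherwise $\min$ should be read as $\inf$.
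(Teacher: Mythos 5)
Your proof is correct and follows essentially the same route as the source the paper defers to (the paper itself only cites~\cite{sicilia2023domain} rather than reproducing the argument): the convex-combination extension of the Ben-David bound, the triangle inequality for the pseudometric $d_{\mathcal{H}\Delta\mathcal{H}}$ through an auxiliary $\mathbb{S}\in\mathcal{O}$, absorption of $\sum_i\varphi_i d_{\mathcal{H}\Delta\mathcal{H}}(\mathbb{P}_i,\mathbb{S})$ via \eqref{eqa:range}, and minimization over $\mathbb{S}$. Your observations that $\sum_i\varphi_i\lambda_i\leq\lambda_\varphi$ for the jointly optimal $h^*$ and that $\min$ should be read as $\inf$ over a nonempty $\mathcal{O}$ are both accurate and handled appropriately.
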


The proof of this theorem is provided in~\cite{sicilia2023domain}. Here, $\mathbb{Q}$ corresponds to the unseen out-of-distribution target domain and $\{\mathbb{P}_i \}_{i=1}^K$ correspond to source domains. $\lambda_\varphi$ is small in reality and often neglected. The term $\sum_i \varphi_i \varepsilon_{\mathbb{P}_i}(h)$ is minimized by cross-entropy loss with class labels as supervision. The term $\frac{1}{2} \max_{i,j}d_{\mathcal{H}\Delta \mathcal{H}}(\mathbb{P}_i,\mathbb{P}_j)$ measures the maximum differences among source domains. This corresponds to multi-modal supervised contrastive learning in our approach, where we map the modality-shared features of different modalities from all source domains to be as close as possible if they have the same label. 
The term $\frac{1}{2}\min_{\mathbb{S} \in \mathcal{O}} d_{\mathcal{H}\Delta \mathcal{H}}(\mathbb{S},\mathbb{Q})$ demonstrates the importance of diverse source distributions~\cite{albuquerque2019generalizing}, so that the
unseen target $\mathbb{Q}$ might be “near” to $\mathcal{O}$. 
Therefore, to enlarge the range of $\mathcal{O}$, we split the feature embedding of each modality into modality-specific and modality-shared parts by maximizing the $\ell_2$ distance to preserve diversity.

\section{Experiments}
\subsection{Experimental Setting}

\noindent\textbf{{Dataset.}} We use the EPIC-Kitchens dataset~\cite{Damen2018EPICKITCHENS} and introduce a novel HAC dataset in this paper, which will be made publicly accessible for further research.  We follow the experimental protocol used for the EPIC-Kitchens dataset in~\cite{Munro_2020_CVPR}. The EPIC-Kitchens dataset includes eight actions (‘put’, ‘take’, ‘open’, ‘close’, ‘wash’, ‘cut’, ‘mix’, and ‘pour’) recorded in three different kitchens, forming three separate domains D1, D2, and D3. Our HAC dataset consists of seven actions (‘sleeping’, ‘watching tv’, ‘eating’, ‘drinking’, ‘swimming’, ‘running’, and ‘opening door’) performed by humans, animals, and cartoon figures, forming three different domains H, A, and C. We collect $3381$ video clips from the internet with around $1000$ samples for each domain. We provide three modalities in our dataset: video, audio, and pre-computed optical flow. Some examples of the HAC dataset are shown in \cref{fig:illus} (b) and more details are provided in the supplementary material.

\noindent\textbf{{Implementation Details.}}
In our framework, we perform experiments on three modalities: video, audio, and optical flow. We adopt the MMAction2~\cite{2020mmaction2} toolkit for experiments. To encode the visual information, we use SlowFast network~\cite{Feichtenhofer_2019_ICCV} initialized with Kinetics-400~\cite{kay2017kinetics} pre-trained weights. For the audio encoder, we use ResNet-18~\cite{7780459} and initialize the weights from the VGGSound pre-trained
checkpoint~\cite{9053174}. Similarly, we use the SlowFast network with slow-only pathway and also Kinetics 400~\cite{kay2017kinetics} pre-trained weights for the optical flow encoder. The dimensions of the unimodal embedding $\mathbf{E}$ for video, audio, and optical flow are $2304$, $512$, and $2048$ correspondingly.
For the projection network $Proj(\cdot)$ in supervised contrastive learning, we instantiate it as a multi-layer perceptron with two hidden layers of size $2048$ and output vector of size $D_P=128$. We use a multi-layer perceptron with two hidden layers of size $2048$ to instantiate the cross-modal translation $MLP_{\mathbf{E}^{i} \rightarrow \mathbf{E}^{j}}$. After obtaining the feature embedding from the encoder,  we split the embedding into modality-shared features (the first half of the embedding)  and modality-specific features (the remaining half). We use the Adam optimizer~\cite{Adam} with a learning rate of $0.0001$ and a batch size of $16$. The scalar temperature parameter $\tau$ is set to $0.1$. Additionally, we set $\alpha_{con} = 3.0$, $\alpha_{dis} = 0.7$, and $\alpha_{trans} = 0.1$. We also analyze the sensitivity of different $\alpha$ in the appendix.
Finally, we train the network for $15$ epochs on an RTX 2080 Ti GPU which takes about $20$ hours and select the model with the best performance on the validation dataset. We report the Top-1 accuracy for all experiments.

\subsection{Results}
%We evaluate our framework under multi-source, single-source, and missing-modality DG settings.
\noindent\textbf{{Multi-modal Multi-source DG.}} 
\cref{tab:epic-dg} and \cref{tab:epic-dg2} illustrate the results of \textit{SimMMDG} under the 
%\begin{table*}[t!]
\begin{wraptable}{r}{0.6\textwidth}
\centering
\resizebox{\linewidth}{!}{
\begin{threeparttable}
\begin{tabular}{lccccc}
\toprule
%& \multicolumn{4}{c}{\textbf{EPIC-Kitchen Activity Recognition Across Domains}}\\
%\cmidrule(lr){2-5}  %:)
\textbf{Method} & D2, D3 $\rightarrow$ D1 & D1, D3 $\rightarrow$ D2 & D1, D2 $\rightarrow$ D3  & \textit{Mean}\\
\midrule
\rowcolor{Gray}
\textbf{I3D backbone} & &  &  &  \\
DeepAll  &  43.19   & 39.35  &       51.47      & 44.67 \\
IBN-Net~\cite{instance_bn}  & 44.46   & 49.21 &     48.97 & 47.55   \\ 
Gradient Blending~\cite{gradient-blending}  &  41.97  & 48.40 &     51.43 & 47.27   \\ 
TBN~\cite{Kazakos_2019_ICCV}  & 42.35  &  47.45 & 49.20    &  46.33  \\
AVSA~\cite{morgado2020learning}  &  42.78   & 47.38 &     51.79 & 47.32   \\
Co-Attention~\cite{cheng2020look}  &  40.87   & 43.57 &     54.88  & 46.44   \\ 
SENet~\cite{squeeze_and_excitation}  &   42.82  & 42.81 &    51.07 &  45.56  \\ 
Non-Local~\cite{non-local}  &  45.72   & 43.08 &     49.49 & 46.10   \\ 
MM-SADA~\cite{Munro_2020_CVPR}  &  39.79   & 52.73 &     51.87  & 48.13   \\ 
RNA-Net~\cite{Planamente_2022_WACV}  & 45.65 & 51.64 &  55.88 &  51.06\\

SimMMDG (ours) & \textbf{54.25} &  \textbf{58.67}  & \textbf{57.28} & \textbf{56.73}  \\
\rowcolor{Gray}
\textbf{SlowFast backbone} &  &  &  &  \\
%RSC (video-only)  & 50.11    &  62.53 &  58.73    &  57.12  \\
%RSC (audio-only)  &  36.55   &  43.73 &  48.15    &42.81    \\
%RSC (flow-only)  &   55.17  & 63.33  &     59.65 &  59.38  \\
%Mixup (video-only)  &   49.20  & 59.73  &   59.96   & 56.30   \\
%Mixup (audio-only)  &  35.17   & 44.40  &   45.07   & 41.55   \\
%Mixup (flow-only)  &  56.32   &  61.07 &   54.62   & 57.34   \\
DeepAll  & 47.13 & 55.73 & 57.17 &  53.34\\
MM-SADA~\cite{Munro_2020_CVPR}  &   49.20  & 60.40  & 59.14     & 56.25    \\
RNA-Net~\cite{Planamente_2022_WACV}  & 52.18 & 59.47 & 60.88  &57.51  \\
SimMMDG (ours) & 		\textbf{57.93} & \textbf{65.47} & \textbf{66.32} & \textbf{63.24} \\
%SimMMDG (ours)2 & 		\textbf{57.93} & \textbf{67.20} & \textbf{65.91} & \textbf{63.68} \\
\bottomrule
\end{tabular}

\end{threeparttable}
}
%\vspace{-0.5em}
\caption{Multi-modal \textbf{multi-source} DG on EPIC-Kitchens dataset using video and audio.}
%\vspace{-0.8em}
\label{tab:epic-dg}
%\end{table*}
\end{wraptable}  
multi-modal multi-source DG setting, where we train on multiple source domains and test on one target domain. 
We first conduct experiments using video and audio modalities for experiments, as in~\cite{Planamente_2022_WACV}. We re-implement our framework employing the same I3D~\cite{carreira2017quo} and BN-Inception~\cite{pmlr-v37-ioffe15} backbone for video and audio as in RNA-Net~\cite{Planamente_2022_WACV} to ensure fair comparisons. The DeepAll approach implies that we feed all the data from source domains to the network without any domain generalization strategies. \cref{tab:epic-dg} shows that our \textit{SimMMDG} significantly outperforms all the baselines. 
When we replace the backbone with SlowFast~\cite{Feichtenhofer_2019_ICCV} and ResNet-18~\cite{7780459}, the results further improve by a large margin (with an average improvement of up to $5.73\%$) compared to the baselines. 
In the following experiments, we adopt SlowFast and ResNet-18 as our default backbones. To verify the generalization of our framework to different modalities, we conduct experiments by combining any two modalities, as well as all three modalities, and present the results in~\cref{tab:epic-dg2}. Our \textit{SimMMDG} outperforms all the baselines by a significant margin in all cases, with improvements of up to $9.58\%$. Notably, when we combine all three modalities, the performance further improves and surpasses that of any two modalities. In contrast, the baseline methods cannot achieve better results with more modalities, indicating that they fail to fully leverage the complementary information between modalities. Finally, we validate the performance of our framework on the HAC dataset, and the results are consistent with those obtained on the EPIC-Kitchens dataset, as demonstrated in~\cref{tab:epic-dg2}. Our \textit{SimMMDG} outperforms all the baselines by a significant margin in all cases, with improvements of up to $7.73\%$.

\begin{table*}[t!]
\centering
\resizebox{\linewidth}{!}{
\begin{threeparttable}
\begin{tabular}{lcccccccccccc}
\toprule
& \multicolumn{3}{c}{\textbf{Modality}} & \multicolumn{4}{c}{\textbf{EPIC-Kitchens dataset}}& \multicolumn{4}{c}{\textbf{HAC dataset}}\\
\cmidrule(lr){2-4} \cmidrule(lr){5-8} \cmidrule(lr){9-12} %:)
\textbf{Method} & Video & Audio & Flow & D2, D3 $\rightarrow$ D1 & D1, D3 $\rightarrow$ D2 & D1, D2 $\rightarrow$ D3  & \textit{Mean} & A, C $\rightarrow$ H & H, C $\rightarrow$ A & H, A $\rightarrow$ C  & \textit{Mean}\\
%\midrule

%RSC (flow-only) & & &$\checkmark$ & 55.17  & 63.33  &     59.65 &  59.38  \\
%Mixup (flow-only) & & &$\checkmark$ & 56.32   &  61.07 &   54.62   & 57.34   \\

\midrule

DeepAll & $\checkmark$& $\checkmark$&    & 47.13 & 55.73 & 57.17 &  53.34& 66.55  & 72.85  &  45.77 & 61.72 \\
MM-SADA~\cite{Munro_2020_CVPR} & $\checkmark$& $\checkmark$& &   49.20  & 60.40  & 59.14     & 56.25  & 65.47  &  72.52 & 44.30  & 60.76 \\
RNA-Net~\cite{Planamente_2022_WACV} & $\checkmark$& $\checkmark$&  & 52.18 & 59.47 & 60.88  &57.51 & 60.20  & 73.95  &  48.90 &  61.02\\
SimMMDG (ours)& $\checkmark$& $\checkmark$&   & 		\textbf{57.93} & \textbf{65.47} & \textbf{66.32} & \textbf{63.24}& 		\textbf{74.77} & \textbf{77.81} & \textbf{53.68} & \textbf{68.75} \\

\midrule

DeepAll & $\checkmark$& &$\checkmark$ & 55.17 & 62.93 & 60.37 &59.49 & 76.78  &  70.64 &49.63 &  65.68 \\
MM-SADA~\cite{Munro_2020_CVPR} & $\checkmark$& &$\checkmark$ & 47.13  & 57.60  &  59.34  & 54.69  & 69.79 & 69.76  &49.45 & 63.00 \\
RNA-Net~\cite{Planamente_2022_WACV} & $\checkmark$& & $\checkmark$& 54.71 & 61.87 & 58.21 &58.26 &  77.14 &74.94  &42.00 &  64.69\\
SimMMDG (ours) & $\checkmark$& & $\checkmark$& \textbf{59.31} & \textbf{63.33} & \textbf{62.73} & \textbf{61.79}& \textbf{79.31} & \textbf{77.04} & \textbf{51.29} & \textbf{69.21} \\

\midrule

DeepAll & & $\checkmark$&$\checkmark$ & 45.28 & 56.40 & 57.08 & 52.92 & 50.04 &  59.71 &38.97 &  49.57 \\
MM-SADA~\cite{Munro_2020_CVPR} & &$\checkmark$ &$\checkmark$ & 47.36  & 53.47  & 60.27   & 53.70  & 46.58 & 61.81  &39.15 &  49.18 \\
RNA-Net~\cite{Planamente_2022_WACV} & & $\checkmark$&$\checkmark$ & 45.74 & 57.73 & 56.47 & 53.31  & 52.05 & 64.13  & 40.35&  52.18\\
SimMMDG (ours) & & $\checkmark$&$\checkmark$ & \textbf{56.09} & \textbf{67.33} & \textbf{61.50} & \textbf{61.64} & \textbf{59.63} & \textbf{64.24} & \textbf{44.85} & \textbf{56.24}\\

\midrule

DeepAll & $\checkmark$& $\checkmark$& $\checkmark$ & 55.63 & 59.20 & 58.01 & 57.61& 69.07&  71.30 &51.47 & 63.95  \\
MM-SADA~\cite{Munro_2020_CVPR} & $\checkmark$&$\checkmark$ &$\checkmark$ & 51.72  &  58.40 &  59.34  &   56.49& 72.53 &  72.19 & 55.51&66.74   \\
RNA-Net~\cite{Planamente_2022_WACV} & $\checkmark$& $\checkmark$& $\checkmark$ & 52.41 & 57.20 & 60.16 & 56.59 &  69.00&  73.40 & 51.65& 64.68 \\
SimMMDG (ours) & $\checkmark$& $\checkmark$& $\checkmark$ & \textbf{63.68} & \textbf{70.13} & \textbf{67.76} & \textbf{67.19} & \textbf{77.65} & \textbf{79.03} & \textbf{56.62} & \textbf{71.10} \\
\bottomrule
\end{tabular}

\end{threeparttable}
}
%\vspace{-0.5em}
\caption{Multi-modal \textbf{multi-source} DG with different modalities on EPIC-Kitchens and HAC datasets.}
%\vspace{-0.8em}
\label{tab:epic-dg2}
\end{table*}

\noindent\textbf{{Multi-modal Single-source DG.}}
The domain labels are not required in our \textit{SimMMDG} framework. This feature makes our method readily applicable to single-source DG without modifications. In this setup, we train the model on a single source domain and test it on multiple target domains. \cref{tab:epic_hac_ss} presents the results of \textit{SimMMDG} under the multi-modal single-source DG setting using video and audio modalities. Despite being trained only on data from a single domain, our model demonstrates robust generalization to unseen domains, with an average improvement of up to $5.71\%$. In comparison, other baseline methods perform even worse than the DeepAll baseline, highlighting their limitations in the single-source DG setting. We present additional results obtained by exploring different combinations of modalities in the appendix.

\begin{table*}[t!]
\centering
\resizebox{\linewidth}{!}{
\begin{threeparttable}
\begin{tabular}{lccccccccccccccc}
\toprule
& \multicolumn{6}{c}{\textbf{EPIC-Kitchens dataset}}&& \multicolumn{6}{c}{\textbf{HAC dataset}}\\
\cmidrule(lr){2-7} \cmidrule(lr){9-14}
\qquad\qquad\; Source: & \multicolumn{2}{c}{D1}& \multicolumn{2}{c}{D2}& \multicolumn{2}{c}{D3}& & \multicolumn{2}{c}{H}& \multicolumn{2}{c}{A}& \multicolumn{2}{c}{C}\\
\cmidrule(lr){2-3} \cmidrule(lr){4-5} \cmidrule(lr){6-7} \cmidrule(lr){9-10} \cmidrule(lr){11-12} \cmidrule(lr){13-14} %:)

\textbf{Method} \quad Target: &  D2 &  D3 &  D1 &  D3& D1&  D2  & \textit{Mean}& A & C & H & C& H& A  & \textit{Mean}\\
\midrule
%RSC (video-only)  & 50.34 & 45.98 &52.67 &58.80 &  42.71& 54.41 & 50.82 \\
%RSC (audio-only)  & 34.02 & 32.87 & 35.33& 43.33&40.45  & 42.09 &  38.02\\
DeepAll  &52.27& 51.75& 44.60& 54.11 & 48.05  & 56.67  &51.24 &66.11 & 43.01 &63.45 & 37.68 &  46.86 & 58.94  &  52.68\\
MM-SADA~\cite{Munro_2020_CVPR}  &  50.67& 50.10&  51.49&  56.57   & 42.99      & 54.00       &  50.97&65.89 & 37.22&57.75 & 40.90 &  49.82 & 62.91  & 52.42 \\
RNA-Net~\cite{Planamente_2022_WACV}  & 45.60& 46.30& 43.68 & 57.39& 49.66  & 55.87   &49.75 & 65.67& 42.92& 61.72& 38.69 & 47.66  & 61.59  &  53.04\\
SimMMDG (ours) & \textbf{54.00} & \textbf{52.26}& \textbf{51.49} & \textbf{60.88}& \textbf{49.88}  & \textbf{60.53}  & \textbf{54.84}  & \textbf{66.67} & \textbf{44.21}& \textbf{68.42} & \textbf{46.05}& \textbf{54.43}  & \textbf{72.73}  & \textbf{58.75} \\
\bottomrule
\end{tabular}

\end{threeparttable}
}
%\vspace{-0.5em}
\caption{Multi-modal \textbf{single-source} DG on EPIC-Kitchens and HAC datasets using video and audio.}
%\vspace{-0.8em}
\label{tab:epic_hac_ss}
\end{table*}

\noindent\textbf{{Missing-modality DG.}}
In real-world deployment scenarios, we cannot always guarantee that all modalities will be available. Hence, the network should be capable of handling missing-modality cases. One simple approach is to set the embedding of the missing modality to zero. We compare this with our proposed cross-modal translation replacement in~\cref{missing-modal}. We retrain the cross-modal translation $MLP_{\mathbf{E}^{i} \rightarrow \mathbf{E}^{j}}$ for $10$ epochs using $\mathcal{L}_{trans}$ in~\cref{eqn:trans_loss} and keep other parameters fixed.
\cref{tab:epic_missing} reports on the comparison results on the EPIC-Kitchens dataset, where our solution yields significant benefits (up to  $10.47\%$ performance improvement) compared to zero-filling. By replacing the missing modality features with the translation ones, our approach also outperforms the unimodal model in most cases. In contrast, zero-filling hurts the network performance in some cases and is even worse than the unimodal model. The last six rows in~\cref{tab:epic_missing} demonstrate that our approach is \emph{robust even when two modalities out of three are missing}. We present more results on HAC dataset in the appendix.

\begin{table*}[t!]
\centering
\resizebox{0.8\linewidth}{!}{
\begin{threeparttable}
\begin{tabular}{lccccccc}
\toprule
% & \multicolumn{3}{c}{\textbf{Modality}} & \multicolumn{4}{c}{\textbf{EPIC-Kitchen Activity Recognition Across Domains}}\\
% \cmidrule(lr){2-4} \cmidrule(lr){5-8} %:)
\textbf{Method} & Video & Audio& Flow  & D2, D3 $\rightarrow$ D1 & D1, D3 $\rightarrow$ D2 & D1, D2 $\rightarrow$ D3  & \textit{Mean}\\

\midrule
\rowcolor{Gray}
DeepAll (Video-only) & $\checkmark$& & & 51.03 & 59.87 & 56.57 & 55.82 \\

SimMMDG (zero-filling) & $\checkmark$& \ding{55} & & 55.40 & 64.00 & 58.32 & 59.24 \\
SimMMDG (translation) & $\checkmark$& \ding{55} & & \textbf{57.24} & \textbf{65.07} & \textbf{59.65} & \textbf{60.65} \\

\hdashline

SimMMDG (zero-filling)& $\checkmark$ & & \ding{55} & 53.10 &  60.93 &  61.09&  58.37  \\
SimMMDG (translation)&$\checkmark$ &  & \ding{55} & \textbf{55.17} &   \textbf{61.33} &  \textbf{61.70} &  \textbf{59.40}  \\

\midrule

\rowcolor{Gray}
DeepAll (Audio-only) &  & $\checkmark$& & 32.87 & 42.27 & 45.17 & 40.10 \\

SimMMDG (zero-filling) & \ding{55} & $\checkmark$ & &  29.20& 37.33 & 45.69 & 37.41 \\
SimMMDG (translation) & \ding{55} & $\checkmark$ & & \textbf{37.70} & \textbf{46.40} & \textbf{52.05} & \textbf{45.38} \\

\hdashline

SimMMDG (zero-filling)&  & $\checkmark$& \ding{55} & 28.51 &  34.27 & 42.81   & 35.20 \\
SimMMDG (translation)& & $\checkmark$ & \ding{55} &\textbf{39.77}  & \textbf{46.00}  & \textbf{51.23} &  \textbf{45.67}  \\

\midrule
\rowcolor{Gray}
DeepAll (Flow-only) & & & $\checkmark$& 54.25 & 61.33 & 55.95 & 57.18 \\

SimMMDG (zero-filling)& \ding{55} &  & $\checkmark$&  56.32 & 60.67 & 56.57 &   57.85  \\
SimMMDG (translation)&   \ding{55}& & $\checkmark$& \textbf{56.32} &  \textbf{62.13}   & \textbf{56.98} & \textbf{58.48}  \\
\hdashline

SimMMDG (zero-filling)&  & \ding{55} & $\checkmark$&  52.41&  62.13 &  51.33 & 55.29  \\
SimMMDG (translation)&  & \ding{55} & $\checkmark$&\textbf{55.40}& \textbf{63.73}   & \textbf{56.57} & \textbf{58.57}   \\

\midrule
SimMMDG (zero-filling)& \ding{55} & $\checkmark$ & $\checkmark$& 53.10 & 62.40  & 64.48 &  59.99  \\
SimMMDG (translation)& \ding{55} & $\checkmark$ & $\checkmark$& \textbf{55.86} & \textbf{68.27}  & \textbf{64.58} &  \textbf{62.90} \\
\hdashline
SimMMDG (zero-filling)& $\checkmark$ & \ding{55} & $\checkmark$& 62.76 &  66.80 & 57.49 & 62.35   \\
SimMMDG (translation)& $\checkmark$ &  \ding{55}& $\checkmark$&\textbf{63.22}  &  \textbf{67.20} & \textbf{59.24} & \textbf{63.22} \\
\hdashline
SimMMDG (zero-filling)& $\checkmark$&$\checkmark$  & \ding{55} & 60.92 & \textbf{68.66}  & 60.99 &  63.52 \\
SimMMDG (translation)& $\checkmark$&$\checkmark$ & \ding{55} & \textbf{62.53} & 68.13  & \textbf{61.60} &  \textbf{64.09}   \\

\midrule
SimMMDG (zero-filling)& $\checkmark$ & \ding{55} & \ding{55}& 59.08 & 62.27  & 52.98 &    58.11 \\
SimMMDG (translation)& $\checkmark$ &  \ding{55}& \ding{55}& \textbf{59.31} & \textbf{64.00}  & \textbf{56.57} &   \textbf{59.96}  \\
\hdashline
SimMMDG (zero-filling)& \ding{55} &$\checkmark$  & \ding{55} & 33.79 & 37.47  & 44.56 &38.61    \\
SimMMDG (translation)& \ding{55} &$\checkmark$ & \ding{55} & \textbf{38.39} &  \textbf{44.53} & \textbf{47.74} &   \textbf{43.55}   \\
\hdashline
SimMMDG (zero-filling)& \ding{55} & \ding{55} & $\checkmark$& 55.17 &  57.07 & 48.46 &    53.57  \\
SimMMDG (translation)& \ding{55} & \ding{55} & $\checkmark$& \textbf{56.32} & \textbf{64.27}  & \textbf{56.57}  &   \textbf{59.05}  \\

\bottomrule
\end{tabular}

\end{threeparttable}
}
%\vspace{-0.5em}
\caption{Multi-modal multi-source DG with \textbf{missing modalities} on EPIC-Kitchens dataset. \ding{55} means the modality is available during training, but is missing in test time.}
%\vspace{-0.8em}
\label{tab:epic_missing}
\end{table*}

% \begin{table*}[t!]
% \centering
% \resizebox{\linewidth}{!}{
% \begin{threeparttable}
% \begin{tabular}{lccccccc}
% \toprule
% & \multicolumn{2}{c}{\textbf{Modality}} & \multicolumn{4}{c}{\textbf{EPIC-Kitchen Activity Recognition Across Domains}}\\
% \cmidrule(lr){2-3} \cmidrule(lr){4-7} %:)
% \textbf{Method} & RGB & Audio  & D2, D3 $\rightarrow$ D1 & D1, D3 $\rightarrow$ D2 & D1, D2 $\rightarrow$ D3  & \textit{Mean}\\
% \midrule

% \rowcolor{Gray}
% \textbf{Single-modal} & & & &  &  &  \\
% Video-only & $\checkmark$& & 51.03 & 59.87 & 56.57 & 55.82 \\
% Audio-only &  & $\checkmark$& 32.87 & 42.27 & 45.17 & 40.10 \\
% \midrule

% \rowcolor{Gray}
% \textbf{Multi-modal} & & & &  &  &  \\
% DeepAll &\ding{55} & $\checkmark$ & \textbf{} & \textbf{} & \textbf{} & \textbf{} \\
% RNA & \ding{55} & $\checkmark$ & \textbf{} & \textbf{} & \textbf{} & \textbf{} \\
% SimMMDG (ours) & \ding{55} & $\checkmark$ & \textbf{} & \textbf{} & \textbf{} & \textbf{} \\

% \midrule
% DeepAll & $\checkmark$&\ding{55} & \textbf{} & \textbf{} & \textbf{} & \textbf{} \\
% RNA & $\checkmark$& \ding{55} & \textbf{} & \textbf{} & \textbf{} & \textbf{} \\
% SimMMDG (ours) & $\checkmark$& \ding{55} & \textbf{} & \textbf{} & \textbf{} & \textbf{} \\

% \bottomrule
% \end{tabular}

% \end{threeparttable}
% }
% \vspace{-0.5em}
% \caption{\textbf{DG with missing modalities} on EPIC-Kitchens.}
% \vspace{-0.8em}
% \label{tab:epic_kitchens2}
% \end{table*}

\subsection{Ablation Studies}
\noindent\textbf{{Ablation on each proposed module.}} 
We conducted extensive ablation studies to investigate the role of each module of \textit{SimMMDG} on EPIC-Kitchens dataset, as shown in~\cref{tab:epic_kitchens_ab1}. Incorporating the supervised contrastive learning loss alone resulted in noticeable improvements. However, the mean accuracy decreased when we integrated feature splitting without imposing any constraints. The results were further enhanced when we added the distance loss to promote diversity, and even more so when we incorporated the cross-modal translation module. Although using only supervised contrastive learning and cross-modal translation yielded satisfactory results, their performance is on average $4.15\%$ lower than the complete approach with feature splitting and distance loss. These findings highlight the significance of segregating the feature embedding of each modality into modality-specific and modality-shared components. Although \textit{SimMMDG} without supervised contrastive learning is already better than most baselines, it still has a performance gap compared with the whole framework, which means the cross-modal translation is helpful but cannot replace contrastive learning. The effect of contrastive learning is similar to explicit feature alignment. It aligns the modality-shared features of different modalities from different source domains with the same label to be as close as possible in the embedding space, while pushing away features with different labels, to make the embedding space more distinctive.

\noindent\textbf{{Comparison against unimodal DG.}} 
\cref{tab:single} presents the results in comparison to unimodal DG algorithms that exclusively rely on either video, audio, or optical flow inputs. We choose RSC~\cite{huang2020self}, Mixup~\cite{zhang2018mixup}, and Fishr~\cite{rame2021ishr} as our baselines. By leveraging information from multiple modalities, our multi-modal DG framework delivers significant improvements (up to $7.74\%$) in terms of performance as compared to unimodal DG methods.

\setlength{\tabcolsep}{2pt}
\begin{table}
\begin{minipage}[t]{0.52\columnwidth}
\begin{adjustbox}{width=\linewidth}

\begin{tabular}{lcccccccc}
\toprule
CL & FS& DL & CT &  D2, D3 $\rightarrow$ D1 & D1, D3 $\rightarrow$ D2 & D1, D2 $\rightarrow$ D3  & \textit{Mean}\\
\midrule

& & &  & 55.86 & 56.27 & 54.21 & 55.45 \\
 $\checkmark$ & & &  & 52.18 & 61.60 & 62.22 &58.67  \\
 $\checkmark$ &  &  & $\checkmark$ & 51.49  & 62.53  &  63.24 &  59.09 \\
  $\checkmark$& $\checkmark$ & &  & 51.72 & 62.67 & 58.93 &  57.77\\
 $\checkmark$ & $\checkmark$ & $\checkmark$ &  & 55.17 & 64.00 & 64.37 & 61.18 \\
 & $\checkmark$ & $\checkmark$ &  $\checkmark$ & 54.25 & 63.47 & 43.04 & 60.25 \\
  $\checkmark$& $\checkmark$ &  $\checkmark$&   $\checkmark$& \textbf{57.93} & \textbf{65.47} & \textbf{66.32} & \textbf{63.24} \\
\bottomrule
\end{tabular}
\end{adjustbox}

\vspace{+0.05cm}
\caption{Ablations of each proposed module on EPIC-Kitchens dataset. CL: supervised contrastive learning, FS: feature splitting, DL: distance loss, CT: cross-modal translation.}

\label{tab:epic_kitchens_ab1}
\end{minipage}
\hfill
\begin{minipage}[t]{0.46\columnwidth}

\begin{adjustbox}{width=\linewidth}
\begin{tabular}{lcccc}
\toprule
%Method & \makecell[c]{D2, D3 \\ $\rightarrow$ D1} & \makecell[c]{D1, D3 \\ $\rightarrow$ D2} & \makecell[c]{D1, D2 \\ $\rightarrow$ D3}  & \textit{Mean}\\
\textbf{Method} &D2, D3 $\rightarrow$ D1 & D1, D3 $\rightarrow$ D2 & D1, D2 $\rightarrow$ D3  & \textit{Mean}\\
 \noalign{\smallskip} \hline \noalign{\smallskip}
RSC (V) & 50.11 & 62.53 & 58.73& 57.12\\    
RSC (A) & 36.55 & 43.73 & 48.15& 42.81\\ 
RSC (F) & 55.17 &63.33  &59.65 & 59.38\\ 
Mixup (V) & 49.20 & 59.73 &59.96 &56.30\\ 
Mixup (A) & 35.17 &  40.80&45.07 & 40.35\\ 
Mixup (F) &56.32  & 65.60 & 54.62&58.85\\ 
Fishr (V) & 53.79 & 63.47 & 61.09 & 59.45 \\ 
Fishr (A) & 37.47 & 44.80 & 47.43 & 43.23 \\ 
Fishr (F) & 54.25 & 63.87 & 59.14 & 59.09 \\

\hline \noalign{\smallskip}

\makecell[c]{SimMMDG \\(V+A+F)} & \textbf{63.68} & \textbf{70.13} & \textbf{67.76} & \textbf{67.19} \\ 
\bottomrule
\end{tabular}
\end{adjustbox}

\vspace{+0.05cm}
\caption{Comparison with single-modal DG methods on EPIC-Kitchens dataset. V: video, A: audio, F: optical flow.}
\vspace{-1cm}

\label{tab:single}
\end{minipage}
\end{table}

\noindent\textbf{{Combine \textit{SimMMDG} with other training strategies.}} \textit{SimMMDG} can be seamlessly combined with other training strategies due to its generality and simplicity. We first combine \textit{SimMMDG} with Gradient Blending~\cite{gradient-blending}, a strategy to improve multi-modal learning. The results are further improved compared to the original \textit{SimMMDG} as shown in~\cref{tab:Combine}. We also combine \textit{SimMMDG} with a Domain-adversarial Neural Network (DANN)~\cite{ganin2015unsupervised} to align the features of source domains using domain labels and observe a similar improvement as with Gradient Blending. This indicates that our \textit{SimMMDG} can be easily combined with other training strategies to get even better results.

\setlength{\tabcolsep}{2pt}
\begin{table}
\begin{minipage}[t]{0.52\columnwidth}
\begin{adjustbox}{width=\linewidth}

\begin{tabular}{lcccc}
\toprule
%Method & \makecell[c]{D2, D3 \\ $\rightarrow$ D1} & \makecell[c]{D1, D3 \\ $\rightarrow$ D2} & \makecell[c]{D1, D2 \\ $\rightarrow$ D3}  & \textit{Mean}\\
\textbf{Method} &D2, D3 $\rightarrow$ D1 & D1, D3 $\rightarrow$ D2 & D1, D2 $\rightarrow$ D3  & \textit{Mean}\\
 \noalign{\smallskip} \hline \noalign{\smallskip}
SimMMDG &  57.93 & 65.47 & 66.32 & 63.24\\    
\quad+Gradient Blending & 59.31 & 68.40 & 66.63& \textbf{64.78}\\ 

\quad+DANN & 60.69 & 66.69 &64.58 & \textbf{64.07} \\ 
\bottomrule
\end{tabular}
\end{adjustbox}

\vspace{+0.05cm}
\caption{Combine SimMMDG with other training strategies on EPIC-Kitchens dataset.}

\label{tab:Combine}
\end{minipage}
\hfill
\begin{minipage}[t]{0.46\columnwidth}

\begin{adjustbox}{width=\linewidth}
\begin{tabular}{lcccc}
\toprule
%Method & \makecell[c]{D2, D3 \\ $\rightarrow$ D1} & \makecell[c]{D1, D3 \\ $\rightarrow$ D2} & \makecell[c]{D1, D2 \\ $\rightarrow$ D3}  & \textit{Mean}\\
\textbf{Method} &EPIC-Kitchens & \textbf{Method} &	MUSTARD  & UR-FUNNY\\
 \noalign{\smallskip} \hline \noalign{\smallskip}
DeepAll & 64.9 &Late Fusion & 61.6& 63.6\\    
TBN & 73.2 & LMF  & 65.2& 63.9\\    
Gradient Blending & 74.0& MULT & 60.9&63.2\\ 
SimMMDG & \textbf{76.4} & SimMMDG & \textbf{72.5} & \textbf{65.6} \\ 
\bottomrule
\end{tabular}
\end{adjustbox}

\vspace{+0.05cm}
\caption{Evaluation under multi-modal classification setup.}
\vspace{-1cm}

\label{tab:general}
\end{minipage}
\end{table}

\noindent\textbf{{\textit{SimMMDG} as a general framework for multi-modal classification.}} Here, we evaluate our framework in a more general multi-modal classification setup without DG. We first evaluate our framework on the EPIC-Kitchens dataset without incorporating the DG setup. To do so, we aggregate data from three domains, we partition the training, validation, and testing data to ensure that no apparent domain shifts exit between the training and testing data. We compare our method with DeepAll, TBN~\cite{Kazakos_2019_ICCV}, and Gradient Blending~\cite{gradient-blending}. The results, as presented in~\cref{tab:general} demonstrate that \textit{SimMMDG} also exhibits significant advantages in the general multi-modal classification setup.

We further evaluate our framework on two multi-modal datasets: MUSTARD~\cite{castro2019towards} and UR-FUNNY~\cite{hasan2019ur}, both available in MultiBench~\cite{liang2021multibench}. These datasets pertain to human sentiment analysis and encompass language, video, and audio modalities. We conduct our experiments using the MultiBench codebase and implement \textit{SimMMDG} within that environment. MultiBench treats human sentiment analysis as a regression task, whereas our framework is tailored for classification. To align the codebase with our classification task, we made the necessary modifications. For all baselines, we apply the same backbone model and solely change the fusion paradigms. In our comparisons, we evaluate our method against Late Fusion, Low-rank Tensor Fusion (LMF)~\cite{liu2018efficient}, and Multimodal Transformer Fusion (MULT)~\cite{tsai2019multimodal}. The results reveal that our \textit{SimMMDG} exhibits clear advantages, outperforming the baselines with an average improvement of $7.3\%$ and $1.7\%$. This suggests that our \textit{SimMMDG} serves as a versatile framework for multi-modal classification tasks and is compatible with various combinations of modalities, such as video+audio+flow and language+video+audio. %The results further strengthen the contributions of our proposed approach.
%%%%%%%%%%%%%%%%%%%%%%%%%%%%%%%%%%%%%%%%%%%%%%%%%%%%%%%%%%%%

\section{Conclusion}
In this paper, we propose the \textit{SimMMDG} framework for multi-modal DG. Our approach involves splitting the features of each modality into modality-specific and modality-shared parts and enforcing constraints on each part using supervised contrastive learning, distance loss, and cross-modal translation. The cross-modal translation module can also be applied in the case of missing-modality generalization. Our experiments on different datasets demonstrate the effectiveness of \textit{SimMMDG}. Furthermore, we introduce a new challenging multi-modal dataset that can serve as a benchmark and guide future research in multi-modal DG problems.  

\noindent\textbf{Limitations.} Currently, the number of cross-modal translation MLP in our framework is $\mathcal{O}(\lvert M \rvert^2)$ and will be complex with the increase of modalities. In future work, the encoder-decoder network proposed in~\cite{dusmanu2021cross} can be used to reduce the complexity to $\mathcal{O}(\lvert M \rvert)$.

\section*{Acknowledgments}

The authors acknowledge the support of "In-service diagnostics of the catenary/pantograph and wheelset axle systems through intelligent algorithms" (SENTINEL) project, supported by the ETH Mobility Initiative.

{\small
\bibliographystyle{plain}
\bibliography{egbib}
}

\newpage

\appendix
%\section{Supplementary Material}
\section{Proof of Theorem~\ref{thm:gap}}
\label{sec:app}

\begin{proof}[Proof of Theorem~\ref{thm:gap}]
Consider the joint mutual information $I(\mathbf{E}^{1}, ..., \mathbf{E}^{M}; y)$. By the chain rule, we have the following decompositions:
\begin{align*}
    I(\mathbf{E}^{1}, ..., \mathbf{E}^{M}; y) &= I(\mathbf{E}^{1}; y) + I(\mathbf{E}^{2}, ..., \mathbf{E}^{M}; y\mid \mathbf{E}^{1}) \\
    &= ... \\
                   &= I(\mathbf{E}^{M}; y) + I(\mathbf{E}^{1}, ..., \mathbf{E}^{M-1}; y\mid \mathbf{E}^{M}).
\end{align*}    
However, since $\mathbf{E}^{i}$ are perfectly aligned, $I(\mathbf{E}^{2}, ..., \mathbf{E}^{M}; y\mid \mathbf{E}^{1}) = ... = I(\mathbf{E}^{1}, ..., \mathbf{E}^{M-1}; y\mid \mathbf{E}^{M}) = 0$, which means $I(\mathbf{E}^{1}, ..., \mathbf{E}^{M}; y) = I(\mathbf{E}^{1}; y) = ... = I(\mathbf{E}^{M}; y)$. On the other hand, by the data processing inequality \cite{beaudry2011intuitive}, we know that
\begin{equation*}
    I(\mathbf{E}^{1}; y) \leq I(x_1; y), ... , I(\mathbf{E}^{M}; y)\leq I(x_M; y).
\end{equation*}
Hence, the following chain of inequalities holds:
\begin{align*}
    I(\mathbf{E}^{1}, ..., \mathbf{E}^{M}; y) &= \min\{I(\mathbf{E}^{1}; y), ..., I(\mathbf{E}^{M}; y)\} \\
                   &\leq \min\{I(x_1; y), ..., I(x_M; y)\} \\
                   &\leq \max\{I(x_1; y), ..., I(x_M; y)\} \\
                   &\leq I(x_1, ..., x_M; y),
\end{align*}
where the last inequality follows from the fact that the joint mutual information $I(x_1, ..., x_M; y),$ is at least as large as any one of $I(x_i;y)$. We use $H(y\mid x_{1}, ..., x_{M})$ to denote the conditional entropy of $y$ given $x_{1}, ..., x_{M}$ as input. According to~\cite{zhao2022fundamental,farnia2016minimax} we have the following variational form: $H(y\mid x_{1}, ..., x_{M}) = \inf_{f}\sE_p[\lce(f(x_{1}, ..., x_{M}), y)]$, where the infimum is over all the prediction functions that take $x_{1}, ..., x_{M}$ as input to predict the target $y$ and the expectation is taken over the joint distribution $p$ of $(x_{1}, ..., x_{M}, y)$.
Therefore, due to the variational form of the conditional entropy, we have
\begin{align*}
& \inf_{f}\sE_p[\lce(f(\mathbf{E}^{1}, ..., \mathbf{E}^{M}), y)] - \inf_{f'}\sE_p[\lce(f'(x_{1}, ..., x_{M}), y)] \\
=&~ H(y\mid \mathbf{E}^{1}, ..., \mathbf{E}^{M}) - H(y\mid x_{1}, ..., x_{M}) \\
=&~ H(y) - H(y\mid x_{1}, ..., x_{M}) - (H(y) - H(y\mid \mathbf{E}^{1}, ..., \mathbf{E}^{M})) \\
    =&~ I(x_{1}, ..., x_{M}; y) - I(\mathbf{E}^{1}, ..., \mathbf{E}^{M}; y) \\
    \geq&~ \max\{I(x_1; y), ..., I(x_M; y)\} - \min\{I(x_1; y), ..., I(x_M; y)\} \\
    =&~ \max\{I(x_i; y); i=1, ..., M\} - \min\{I(x_i; y); i=1, ..., M\} \\
    =&~ \Delta_p\qedhere.
\end{align*}
\end{proof}

\section{More Details on HAC Dataset}
Our Human-Animal-Cartoon (HAC) dataset consists of seven actions (‘sleeping’, ‘watching tv’, ‘eating’, ‘drinking’, ‘swimming’, ‘running’, and ‘opening door’) performed by humans, animals, and cartoon figures, forming three different domains. We collect $3381$ video clips from the internet with around $1000$ for each domain and provide three modalities in our dataset: video, audio, and pre-computed optical flow. The dense optical flow is extracted at $24$ frames per second using the TV-L1 algorithm~\cite{zach2007duality}. 

Our dataset was collected by $5$ volunteers. For the human domain, we collect the data by selecting actions from an existing Kinetics-600 dataset~\cite{carreira2018short}. We select approximately the same number of video clips for each action to ensure class balance. For the animal domain, we use the available $200$ video clips in~\cite{ZhangCVPR2022} and extend it to $906$. We collect data from YouTube by searching keywords like ‘animal sleeping’, 'animal eating', 'animal running', etc. To increase the diversity of the dataset, we also specify the animal type in the keywords, such as ‘cat sleeping’, 'dog eating', 'horse running', etc. Each participant was asked to collect certain actions, such as Participant A for ‘sleeping’ and ‘watching tv’, Participant B for ‘eating’ and ‘drinking’, etc.  For the cartoon domain, we collect all data from scratch and we collect data from popular cartoons like 'SpongeBob SquarePants', 'The Simpsons', 'Garfield and Friends', etc. Each participant was asked to collect from one or two cartoon series to avoid duplication. For each action, we annotated the start and end times in the video and then cut out video clips. The length of each video clip varies from $1$\,s to $10$\,s. Finally, we gathered the data from all volunteers and a separate person manually discarded unqualified data like duplicate videos, videos without audio data, noisy/wrong classes, etc.

\cref{fig:hac_stat} and \cref{tab:hac-num} illustrate the statistcs of our HAC dataset. During the training phase, we train our model on one or two domains by utilizing the training action segments specific to those domains. The model selection is based on the performance of the validation action segments. Subsequently, we evaluate the model on the remaining domain, utilizing all the segments within that domain. Therefore, the total number of testing action segments is the sum of the training and validation segments.
It is observed that each class within the HAC dataset exhibits a good balance across all domains. In contrast, EPIC-Kitchens dataset suffers from severe class imbalance. This implies that our proposed framework is capable of delivering high-performance results across both balanced and imbalanced datasets.
\cref{fig:hac} presents additional examples from our HAC dataset, highlighting the substantial domain shifts present in our dataset, particularly in the cartoon domain. The main purpose of this dataset is to be used for multi-modal domain generalization research. Of course, our dataset can also be used for other applications like multi-modal learning and multi-modal domain adaptation.

\begin{figure}[t]
  \centering  \includegraphics[width=0.8\linewidth]{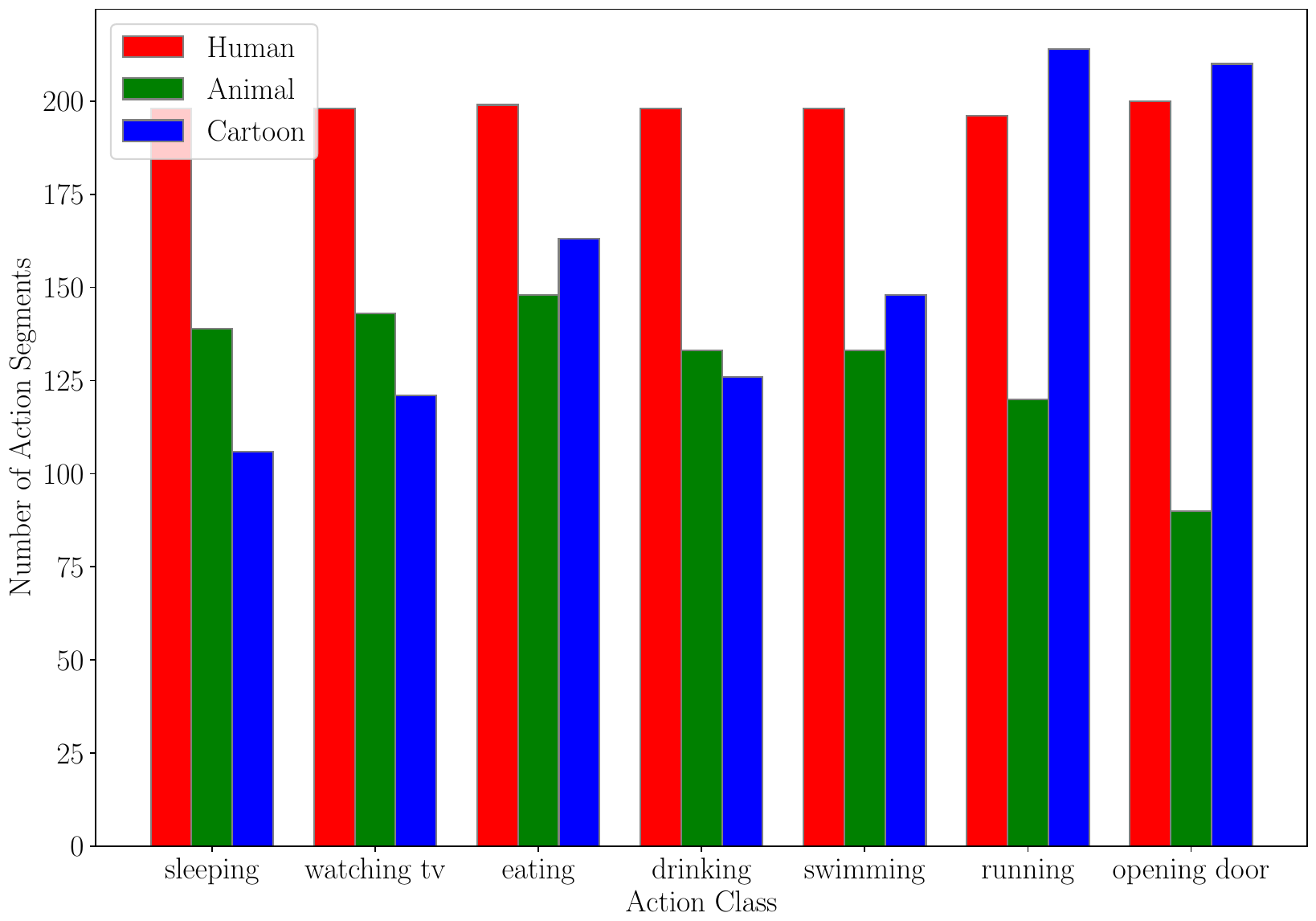}
   \caption{The number of each action segments for our HAC dataset. }
   \label{fig:hac_stat}
\end{figure}

\begin{table}[t]
\centering
\begin{tabular}{lcccc}
\hline
Domain                     & Human & Animal & Cartoon \\ \hline
Training Action Segments   &     1111  &      730  &    870     \\
Validation Action Segments &   276    &    176    &       218  \\
Testing Action Segments    &    1387   &  906      &     1088    \\ \hline
\end{tabular}
\vspace{0.5em}
\caption{Number of action segments per domain.}
%\vspace{-0.8em}
\label{tab:hac-num}
\end{table}

\begin{figure}[t]
  \centering  \includegraphics[width=0.8\linewidth]{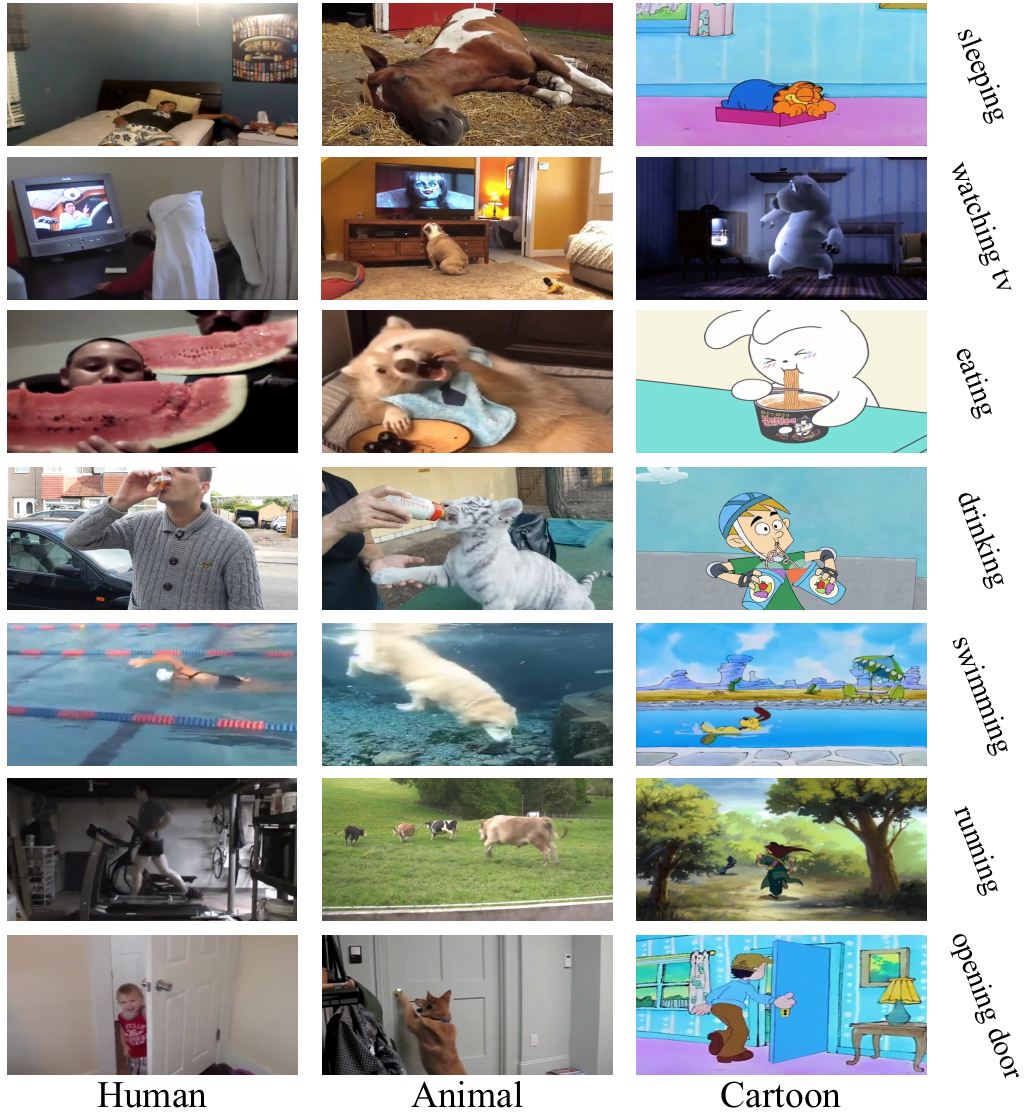}
   \caption{More examples on our HAC dataset. }
   \label{fig:hac}
\end{figure}

\section{Additional Implementation Details}
We follow RNA-Net~\cite{Planamente_2022_WACV} to select the baseline methods. MM-SADA~\cite{Munro_2020_CVPR} is a domain adaption method and we only use the self-supervision loss without adversarial alignment as we have no access on target domain data during training. From the results in Table 1 in main paper, we know 
RNA-Net~\cite{Planamente_2022_WACV} and MM-SADA~\cite{Munro_2020_CVPR} are the two best baseline methods. Therefore, we select them together with DeepAll as baselines in the following experiments.

\section{Further Ablations}
\noindent\textbf{{Ablations of different distance types.}} 
\cref{tab:epic_kitchens_ab2} shows the ablations of different distance types in distance loss $\mathcal{L}_{dis}$. We explore the effects of using $\ell_2$ norm, $\ell_1$ norm, and Cosine similarity. Our experimental results indicate that all three distance types outperform the baselines, with $\ell_2$ norm achieving the highest performance.

\begin{table*}[t!]
\centering
\resizebox{0.6\linewidth}{!}{
\begin{threeparttable}
\begin{tabular}{lcccccc}
\toprule
%& \multicolumn{4}{c}{\textbf{EPIC-Kitchen Activity Recognition Across Domains}}\\
%\cmidrule(lr){2-5}  %:)
\textbf{Distance Type} & D2, D3 $\rightarrow$ D1 & D1, D3 $\rightarrow$ D2 & D1, D2 $\rightarrow$ D3  & \textit{Mean}\\
\midrule

%norm-2-norm  & 57.93 & 61.87 & 57.91 & 59.24 \\
Cosine  & 53.56  &60.80  & 59.65 &  58.00\\
$\ell_1$ norm &55.86  & 62.27 & 58.73 &58.95  \\
$\ell_2$ norm & \textbf{57.93} & \textbf{65.47} & \textbf{66.32} & \textbf{63.24} \\
\bottomrule
\end{tabular}

\end{threeparttable}
}
%\vspace{-0.5em}
\caption{Ablation of different distance types in distance loss on EPIC-Kitchens dataset.}
%\vspace{-0.8em}
\label{tab:epic_kitchens_ab2}
\end{table*}

\noindent\textbf{{Parameter Sensitivity.}} 
We investigate the sensitivity of our method to the hyperparameters in the loss function. We perform this analysis by varying one parameter while fixing the others, and present our findings in~\cref{fig:sensitivity}. The results demonstrate that our method consistently outperforms the DeepAll baseline for all parameter settings, indicating that our approach is less sensitive to hyperparameter choices. However, we can also observe $\alpha_{dis}$ is a little sensitive and may need attention for tuning in real applications.

\begin{figure}[ht!]
  \centering  \includegraphics[width=\linewidth]{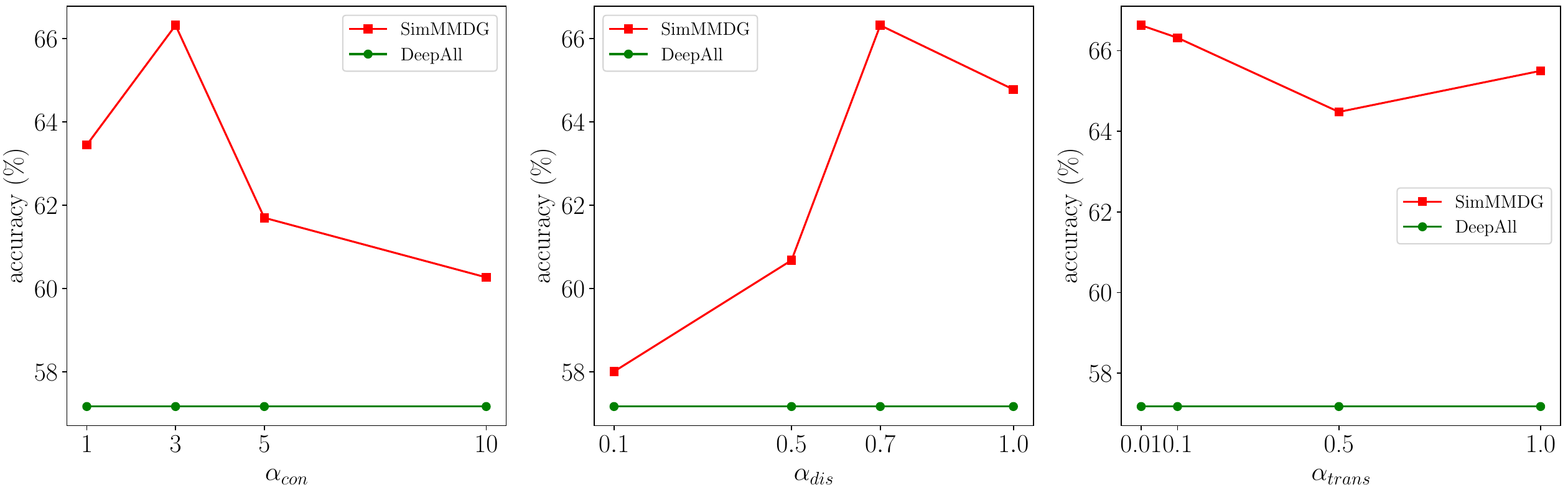}
   \caption{Parameter sensitivity (D1, D2 $\rightarrow$ D3 in EPIC-Kitchens for multi-modal multi-source DG).}
   \label{fig:sensitivity}
   %\vspace{-0.2cm}
\end{figure}

\section{Other Experimental Results}
\noindent\textbf{{Multi-modal Single-source DG.}}
\cref{tab:epic-ssdg-appen} and \cref{tab:hac-ssdg-appen} show more results under multi-modal single-source DG setting using the combination of different modalities. Our model demonstrates
robust generalization to unseen domains compared to the baseline methods in most cases with a large margin up to $4.81\%$.

\begin{table*}[t!]
\centering
\resizebox{0.9\linewidth}{!}{
\begin{threeparttable}
\begin{tabular}{lccccccccccc}
\toprule
& \multicolumn{3}{c}{\textbf{Modality}} & \multicolumn{2}{c}{\textbf{Source: D1}}& \multicolumn{2}{c}{\textbf{Source: D2}}& \multicolumn{2}{c}{\textbf{Source: D3}}\\
\cmidrule(lr){2-4} \cmidrule(lr){5-6} \cmidrule(lr){7-8} \cmidrule(lr){9-10}
\textbf{Method} & Video & Audio & Flow & D1$\rightarrow$ D2 & D1$\rightarrow$ D3 & D2 $\rightarrow$ D1 & D2 $\rightarrow$ D3& D3$\rightarrow$ D1& D3$\rightarrow$ D2  & \textit{Mean}\\
%\midrule

%RSC (flow-only) & & &$\checkmark$ & 55.17  & 63.33  &     59.65 &  59.38  \\
%Mixup (flow-only) & & &$\checkmark$ & 56.32   &  61.07 &   54.62   & 57.34   \\

\midrule
DeepAll & $\checkmark$& &$\checkmark$ &53.07  & 47.74 &49.66 & 56.98&\textbf{52.18}  & 55.73&52.56  \\
MM-SADA~\cite{Munro_2020_CVPR} & $\checkmark$& &$\checkmark$ &  49.20& 48.87 & 51.26&59.03 &44.60 & 56.40& 51.56 \\
RNA-Net~\cite{Planamente_2022_WACV} & $\checkmark$& & $\checkmark$& 53.20& 49.79 & 52.18&58.11 & 47.13&55.47 & 52.65 \\
SimMMDG (ours) & $\checkmark$& & $\checkmark$& \textbf{56.80} & \textbf{54.11} & \textbf{53.10} & \textbf{59.86}  & 50.57& \textbf{64.27} & \textbf{56.45}\\

\midrule
DeepAll & & $\checkmark$&$\checkmark$ & 41.60 &  47.43&41.61 &51.75 &45.98 & 56.13& 47.42 \\
MM-SADA~\cite{Munro_2020_CVPR} & &$\checkmark$ &$\checkmark$ &42.53 & 47.64 &42.76 & 50.92& 42.07& 51.87& 46.30 \\
RNA-Net~\cite{Planamente_2022_WACV} & & $\checkmark$&$\checkmark$ & 46.40 & 49.38 &42.76 & 54.62&43.45 & 52.13& 48.12 \\
SimMMDG (ours) & & $\checkmark$&$\checkmark$ & \textbf{50.00} & \textbf{49.49} & \textbf{45.98} & \textbf{56.26}  & \textbf{51.95} & \textbf{63.87} & \textbf{52.93}\\

\midrule
DeepAll & $\checkmark$& $\checkmark$& $\checkmark$ & 51.60 & 53.39 &44.60 & \textbf{60.16} &45.74 &53.47 &  51.49\\
MM-SADA~\cite{Munro_2020_CVPR} & $\checkmark$&$\checkmark$ &$\checkmark$ &49.43  & 48.53 & 38.16&48.46 &44.14 & 51.07& 46.63 \\
RNA-Net~\cite{Planamente_2022_WACV} & $\checkmark$& $\checkmark$& $\checkmark$ &  51.20&  53.90&46.67 & 58.73&49.43 & 57.47& 52.90 \\
SimMMDG (ours) & $\checkmark$& $\checkmark$& $\checkmark$ & \textbf{55.33} & \textbf{54.83} & \textbf{50.80} & 59.45  & \textbf{53.33} & \textbf{64.40} & \textbf{56.36}\\
\bottomrule
\end{tabular}

\end{threeparttable}
}
%\vspace{-0.5em}
\caption{Multi-modal \textbf{single-source} DG  with different modalities on EPIC-Kitchens dataset.}
%\vspace{-0.8em}
\label{tab:epic-ssdg-appen}
\end{table*}

\begin{table*}[t!]
\centering
\resizebox{0.8\linewidth}{!}{
\begin{threeparttable}
\begin{tabular}{lccccccccccc}
\toprule
& \multicolumn{3}{c}{\textbf{Modality}} & \multicolumn{2}{c}{\textbf{Source: H}}& \multicolumn{2}{c}{\textbf{Source: A}}& \multicolumn{2}{c}{\textbf{Source: C}}\\
\cmidrule(lr){2-4} \cmidrule(lr){5-6} \cmidrule(lr){7-8} \cmidrule(lr){9-10}
\textbf{Method} & Video & Audio & Flow & H$\rightarrow$ A & H$\rightarrow$ C & A $\rightarrow$ H & A $\rightarrow$ C& C$\rightarrow$ H& C$\rightarrow$ A  & \textit{Mean}\\

\midrule
DeepAll & $\checkmark$& &$\checkmark$ &59.82  &41.08  &73.25 & 35.02& 60.71&64.46 & 55.72 \\
MM-SADA~\cite{Munro_2020_CVPR} & $\checkmark$& &$\checkmark$ & \textbf{66.23} & \textbf{46.23} &69.65 & 42.46&59.77 &57.62 &56.99  \\
RNA-Net~\cite{Planamente_2022_WACV} & $\checkmark$& & $\checkmark$& 58.61& 37.87 & 73.90&48.16 &60.35 &59.38 & 56.38 \\
SimMMDG (ours) & $\checkmark$& & $\checkmark$& 63.13 & 44.58 & \textbf{74.62} & \textbf{51.38}  & \textbf{70.15} & \textbf{61.59} & \textbf{60.91}\\

\midrule
DeepAll & & $\checkmark$&$\checkmark$ & 59.93 & 35.29 & 50.90&35.20 & 32.73& 39.85& 42.32 \\
MM-SADA~\cite{Munro_2020_CVPR} & &$\checkmark$ &$\checkmark$ &55.08 & 32.90 & 52.85&34.10 & 29.70& 44.48&  41.52\\
RNA-Net~\cite{Planamente_2022_WACV} & & $\checkmark$&$\checkmark$ & 56.51 & 30.24 &53.50 & 33.46&36.91 &40.40 & 41.84 \\
SimMMDG (ours) & & $\checkmark$&$\checkmark$ & \textbf{62.25} & \textbf{36.40} & \textbf{56.60} & \textbf{38.79}  & \textbf{38.72} & \textbf{48.12} & \textbf{46.81}\\

\midrule
DeepAll & $\checkmark$& $\checkmark$& $\checkmark$ & 62.58 & 42.56 &67.34 &46.05 &53.86 &59.49 & 55.31 \\
MM-SADA~\cite{Munro_2020_CVPR} & $\checkmark$&$\checkmark$ &$\checkmark$ &  62.25& 44.67 &60.78 & 43.11&41.82 &44.81 & 49.57 \\
RNA-Net~\cite{Planamente_2022_WACV} & $\checkmark$& $\checkmark$& $\checkmark$ & 66.89 & 45.40 &66.62 &46.88 &53.64 & 61.37& 56.80 \\
SimMMDG (ours) & $\checkmark$& $\checkmark$& $\checkmark$ & \textbf{68.76} & \textbf{47.98} & \textbf{70.94} & \textbf{49.26}  & \textbf{65.18} & \textbf{62.69} & \textbf{60.80}\\
\bottomrule
\end{tabular}

\end{threeparttable}
}
%\vspace{-0.5em}
\caption{Multi-modal \textbf{single-source} DG  with different modalities on HAC dataset.}
%\vspace{-0.8em}
\label{tab:hac-ssdg-appen}
\end{table*}

\noindent\textbf{{Missing-modality DG.}} \cref{tab:hac_missing} shows more results under the missing-modality DG setting on HAC dataset. Our solution yields significant benefits compared to zero-filling in most cases, similar to findings in the EPIC-Kitchens dataset. Our approach is also robust even when two modalities out of three are missing. We also validate the effectiveness of our cross-modal translation module on DeepAll, MM-SADA~\cite{Munro_2020_CVPR}, and RNA-Net~\cite{Planamente_2022_WACV}, as shown in \cref{tab:epic_missing_base}. Compared to zero-filling, replacing the features of missing modalities with the translated ones from other available modalities achieves better performance.

\begin{table*}[t!]
\centering
\resizebox{0.75\linewidth}{!}{
\begin{threeparttable}
\begin{tabular}{lccccccc}
\toprule
% & \multicolumn{3}{c}{\textbf{Modality}} & \multicolumn{4}{c}{\textbf{EPIC-Kitchen Activity Recognition Across Domains}}\\
% \cmidrule(lr){2-4} \cmidrule(lr){5-8} %:)
\textbf{Method} & Video & Audio& Flow  & A, C $\rightarrow$ H & H, C $\rightarrow$ A & H, A $\rightarrow$ C  & \textit{Mean}\\

\midrule
\rowcolor{Gray}
DeepAll (Video-only) & $\checkmark$& & & 75.20  &  74.83 & 38.88  &  62.97 \\

SimMMDG (zero-filling) & $\checkmark$& \ding{55} & & 73.97  & 75.94  & 51.10  & 67.00  \\
SimMMDG (translation) & $\checkmark$& \ding{55} & & \textbf{77.51} & \textbf{77.15} & \textbf{53.03} & \textbf{69.23} \\

\hdashline

SimMMDG (zero-filling)& $\checkmark$ & & \ding{55} &  79.60 &  76.93 &  50.09 &  68.87 \\
SimMMDG (translation)&$\checkmark$ &  & \ding{55} & \textbf{79.67} & \textbf{77.15} & \textbf{51.29} & \textbf{69.37} \\

\midrule

\rowcolor{Gray}
DeepAll (Audio-only) &  & $\checkmark$& & 29.99 &  37.75 & 23.35  & 30.36  \\

SimMMDG (zero-filling) & \ding{55} & $\checkmark$ & & 25.16  &  26.05 & 15.99  &  22.40 \\
SimMMDG (translation) & \ding{55} & $\checkmark$ & & \textbf{28.26} & \textbf{39.74} & \textbf{22.15} & \textbf{30.05} \\

\hdashline

SimMMDG (zero-filling)&  & $\checkmark$& \ding{55} & 31.36 & 38.63  & 26.65  &  32.21 \\
SimMMDG (translation)& & $\checkmark$ & \ding{55} &\textbf{33.81} & \textbf{40.95} & \textbf{27.02} & \textbf{33.93} \\

\midrule
\rowcolor{Gray}
DeepAll (Flow-only) & & & $\checkmark$& 54.87 & 56.62  &   40.25& 50.58  \\

SimMMDG (zero-filling)& \ding{55} &  & $\checkmark$& 54.79 &  51.43 & 40.44  &  48.89 \\
SimMMDG (translation)&   \ding{55}& & $\checkmark$& \textbf{59.12} & \textbf{60.38} & \textbf{41.18} & \textbf{53.56} \\
\hdashline

SimMMDG (zero-filling)&  & \ding{55} & $\checkmark$& 57.75 & 60.38  & \textbf{44.12}  &  54.08 \\
SimMMDG (translation)&  & \ding{55} & $\checkmark$&\textbf{58.54} & \textbf{61.26} & 42.65 & \textbf{54.15} \\

\midrule
SimMMDG (zero-filling)& \ding{55} & $\checkmark$ & $\checkmark$& 47.87 &  51.21 & \textbf{41.45}   & 46.84  \\
SimMMDG (translation)& \ding{55} & $\checkmark$ & $\checkmark$& \textbf{52.05} & \textbf{61.70} & 41.08& \textbf{51.61} \\
\hdashline
SimMMDG (zero-filling)& $\checkmark$ & \ding{55} & $\checkmark$& \textbf{80.32} & \textbf{79.36}  & \textbf{55.88}   &  \textbf{71.85} \\
SimMMDG (translation)& $\checkmark$ &  \ding{55}& $\checkmark$&79.24 & 78.26 & 53.49&  70.33\\
\hdashline
SimMMDG (zero-filling)& $\checkmark$&$\checkmark$  & \ding{55} & 76.64 & 79.14  & 53.31  &  69.70 \\
SimMMDG (translation)& $\checkmark$&$\checkmark$ & \ding{55} & \textbf{77.58} & \textbf{79.91} & \textbf{55.88} & \textbf{71.12} \\

\midrule
SimMMDG (zero-filling)& $\checkmark$ & \ding{55} & \ding{55}& \textbf{80.39} &  \textbf{79.03} &  52.67 &  70.70 \\
SimMMDG (translation)& $\checkmark$ &  \ding{55}& \ding{55}& 79.60 & 78.48 & \textbf{54.04} & \textbf{70.71} \\
\hdashline
SimMMDG (zero-filling)& \ding{55} &$\checkmark$  & \ding{55} & 22.71 & 32.45  & 23.62  & 26.26  \\
SimMMDG (translation)& \ding{55} &$\checkmark$ & \ding{55} & \textbf{30.28} & \textbf{41.83} & \textbf{29.60} & \textbf{33.90} \\
\hdashline
SimMMDG (zero-filling)& \ding{55} & \ding{55} & $\checkmark$& 56.02 & 57.40  & 38.97  &  50.80 \\
SimMMDG (translation)& \ding{55} & \ding{55} & $\checkmark$& \textbf{57.25} & \textbf{61.04} & \textbf{39.15} & \textbf{52.48} \\

\bottomrule
\end{tabular}

\end{threeparttable}
}
%\vspace{-0.5em}
\caption{Multi-modal multi-source DG with \textbf{missing modalities} on HAC dataset. \ding{55} means the modality is available during training, but is missing in test time.}
%\vspace{-0.8em}
\label{tab:hac_missing}
\end{table*}

\begin{table*}[t!]
\centering
\resizebox{0.8\linewidth}{!}{
\begin{threeparttable}
\begin{tabular}{lccccccc}
\toprule
% & \multicolumn{3}{c}{\textbf{Modality}} & \multicolumn{4}{c}{\textbf{EPIC-Kitchen Activity Recognition Across Domains}}\\
% \cmidrule(lr){2-4} \cmidrule(lr){5-8} %:)
\textbf{Method} & Video & Audio & D2, D3 $\rightarrow$ D1 & D1, D3 $\rightarrow$ D2 & D1, D2 $\rightarrow$ D3  & \textit{Mean}\\

\midrule
\rowcolor{Gray}
DeepAll (Video-only) & $\checkmark$& & 51.03 & 59.87 & 56.57 & 55.82 \\

DeepAll (zero-filling) & $\checkmark$& \ding{55} &45.98  & 53.73 &  52.87& 50.86 \\
DeepAll (translation) & $\checkmark$& \ding{55} & \textbf{47.82} & \textbf{54.27} & \textbf{54.83} & \textbf{52.31} \\

\hdashline
MM-SADA~\cite{Munro_2020_CVPR} (zero-filling) & $\checkmark$& \ding{55} & 48.97 &  58.00& 58.52 & 55.16 \\
MM-SADA~\cite{Munro_2020_CVPR} (translation) & $\checkmark$& \ding{55} & \textbf{50.80} & \textbf{58.27} & \textbf{58.93} & \textbf{56.00} \\

\hdashline
RNA-Net~\cite{Planamente_2022_WACV} (zero-filling) & $\checkmark$& \ding{55} & 52.41 & 54.80 & 53.08 &53.43  \\
RNA-Net~\cite{Planamente_2022_WACV} (translation) & $\checkmark$& \ding{55} & \textbf{54.25} & \textbf{58.13} & \textbf{55.85} & \textbf{56.08} \\

\hdashline
SimMMDG (zero-filling) & $\checkmark$& \ding{55} & 55.40 & 64.00 & 58.32 & 59.24 \\
SimMMDG (translation) & $\checkmark$& \ding{55} & \textbf{57.24} & \textbf{65.07} & \textbf{59.65} & \textbf{60.65} \\

\midrule

\rowcolor{Gray}
DeepAll (Audio-only) &  & $\checkmark$& 32.87 & 42.27 & 45.17 & 40.10 \\

DeepAll (zero-filling) & \ding{55}& $\checkmark$ & 37.70 & 37.47 & 46.71 & 40.63 \\
DeepAll (translation) & \ding{55}& $\checkmark$ & \textbf{38.39} & \textbf{42.13} & \textbf{50.10} & \textbf{43.54} \\

\hdashline

MM-SADA~\cite{Munro_2020_CVPR} (zero-filling) & \ding{55}& $\checkmark$ & 36.09 & 40.13 &  46.51&  40.91\\
MM-SADA~\cite{Munro_2020_CVPR} (translation) & \ding{55}& $\checkmark$ & \textbf{37.01} & \textbf{41.33} & \textbf{47.95} & \textbf{42.10} \\

\hdashline

RNA-Net~\cite{Planamente_2022_WACV} (zero-filling) & \ding{55}& $\checkmark$ &35.86  & 40.27 &46.71  &40.95  \\
RNA-Net~\cite{Planamente_2022_WACV} (translation) & \ding{55}& $\checkmark$ & \textbf{39.31} & \textbf{42.53} & \textbf{48.97} & \textbf{43.60} \\

\hdashline

SimMMDG (zero-filling) & \ding{55} & $\checkmark$ &  29.20& 37.33 & 45.69 & 37.41 \\
SimMMDG (translation) & \ding{55} & $\checkmark$ &\textbf{37.70} & \textbf{46.40} & \textbf{52.05} & \textbf{45.38} \\

\bottomrule
\end{tabular}

\end{threeparttable}
}
%\vspace{-0.5em}
\caption{Multi-modal multi-source DG with \textbf{missing modalities} on EPIC-Kitchens dataset for different baselines. \ding{55} means the modality is available during training, but is missing in test time.}
%\vspace{-0.8em}
\label{tab:epic_missing_base}
\end{table*}

\noindent\textbf{{Statistical Significance Tests.}} 
We run each experiment three times using different seeds for multi-modal multi-source DG on the EPIC-Kitchens dataset and then calculate the mean and standard deviation to show the statistical significance of our methods. As shown in~\cref{tab:epic-err}, our framework is statistically stable and surpasses the baselines significantly.
\begin{table*}[t!]
%\begin{wraptable}{r}{0.6\textwidth}
\centering
\resizebox{0.7\linewidth}{!}{
\begin{threeparttable}
\begin{tabular}{lccccc}
\toprule
%& \multicolumn{4}{c}{\textbf{EPIC-Kitchen Activity Recognition Across Domains}}\\
%\cmidrule(lr){2-5}  %:)
\textbf{Method} & D2, D3 $\rightarrow$ D1 & D1, D3 $\rightarrow$ D2 & D1, D2 $\rightarrow$ D3  & \textit{Mean}\\
\midrule
DeepAll  & $ 48.81\pm1.77 $& $ 55.78\pm1.14 $ & $ 57.45\pm1.19 $ & $ 54.01\pm0.58 $ \\
MM-SADA~\cite{Munro_2020_CVPR}  & $48.89\pm0.43 $& $ 59.15\pm1.10 $ & $ 59.31\pm0.81 $ & $ 55.79\pm0.69 $ \\
RNA-Net~\cite{Planamente_2022_WACV}  &  $51.26\pm3.07 $& $58.54\pm0.82 $ & $ 59.14\pm1.24 $ & $ 56.31\pm1.22 $ \\
SimMMDG (ours) & $\textbf{57.70}\pm0.50 $ & $\textbf{67.33}\pm0.99 $ & $\textbf{64.41}\pm1.36 $ & $\textbf{63.15}\pm0.88 $\\
\bottomrule
\end{tabular}

\end{threeparttable}
}
%\vspace{-0.5em}
\caption{\textbf{Statistical significance tests} for multi-modal multi-source DG on EPIC-Kitchens dataset using video and audio.}
%\vspace{-0.8em}
\label{tab:epic-err}
\end{table*}
%\end{wraptable} 

\section{Visualization}
We show more visualizations of the learned embeddings using t-SNE~\cite{van2008visualizing} in~\cref{fig:visual}. We can observe that the embeddings of video, audio, and the concatenation of video and audio are all well aligned for the source and target domains. Besides, the modality-specific and modality-shared embeddings are also well-separated and aligned across domains. 

\begin{figure}[t]
  \centering  \includegraphics[width=\linewidth]{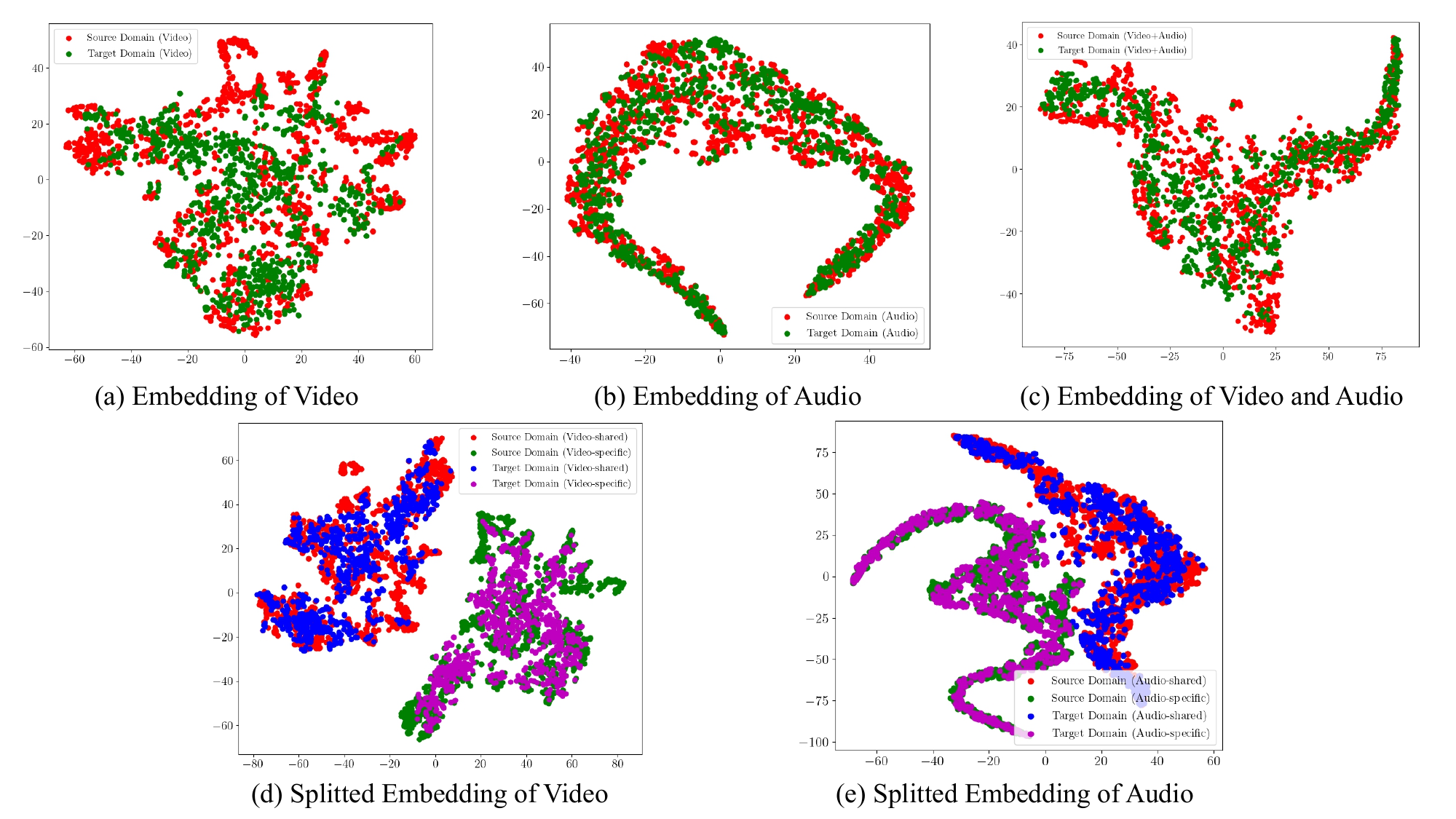}
   \caption{Visualization of the learned embeddings using t-SNE (D1, D2 $\rightarrow$ D3 in EPIC-Kitchens for multi-modal multi-source DG). }
   \label{fig:visual}
\end{figure}

\section{More Intuitions behind Cross-modal Translation Module}
In our \textit{SimMMDG} framework, we introduce a cross-modal translation module to further regularize the learned features and facilitate missing-modality generalization. Here we give more intuitions behind this module.

\noindent\textbf{{Cross-modal translation won't undermine the unique features of different modalities.}} We want to learn an MLP projection to translate the embedding $\mathbf{E}^{i}$ of the $i$-th modality to the embedding $\mathbf{E}^{j}$ of the $j$-th modality. We add a translation loss to make the translated embedding $\mathbf{E}^{j}_{t}$ to be close to $\mathbf{E}^{j}$, without any explicit alignment or constraints on $\mathbf{E}^{i}$ and $\mathbf{E}^{j}$. We apply the cross-modal translation on the integrated feature of each modality $\mathbf{E}^{i} = [\mathbf{E}^{i}_s; \mathbf{E}^{i}_c]$, which is the concatenation of modality-specific feature $\mathbf{E}_s$ and modality-shared feature $\mathbf{E}_c$. We still enforce a distance loss on $\mathbf{E}_s$ and $\mathbf{E}_c$ at the same time. Therefore, the modality-specific and modality-shared features are still forced to be separated during the training progress. The embedding visualizations shown in \cref{fig:visual} (d) and (e) also indicate that for both video and audio, their modality-specific and modality-shared features are well disentangled and are not influenced by the cross-modal translation module.

\noindent\textbf{{Cross-modal translation is a type of modalities interaction,}} where different modality elements interact to give rise to new information when integrated together for task inference~\cite{liang2022foundations}. Several works~\cite{frome2013devise,pham2019found} have already demonstrated that modality interaction can help improve the performance of multi-modal tasks. For example, the proposed approach in~\cite{frome2013devise} learns a coordinated similarity space between image and text to improve image classification. The approach proposed in~\cite{pham2019found} translates language into video and audio for language sentiment analysis.

\noindent\textbf{{Cross-modal translation can be thought of as a means to leverage the information from one modality to infer as much information as possible for the target modality.}} Just like when we hear a dog barking, we will fill in the picture of the puppy in our mind, and when we see a foreign language, we will automatically translate it into our native language. Although there is information loss and we can't recover all the details during this translation progress, we can still infer some useful information for the target modality. As shown in our ablation study in~\cref{tab:epic_kitchens_ab1}, adding the cross-modal translation module indeed improves multi-modal DG performance. 

More importantly, our \noindent\textbf{{cross-modal translation module can be used for improving missing-modality generalization,}} by filling in the features of missing modality with the features inferred/translated from the available modality. The benefit of adopting cross-modal translation is demonstrated in~\cref{tab:epic_missing}. Our approach is robust even when two modalities out of three are missing. This indicates that the inferred information from the cross-modal translation module is valuable and useful for downstream tasks.

\section{Further Discussions on Modality-specific and Modality-shared Features}
Based on our assumptions in the paper, modality-shared features reflect shared information between all modalities, like all modalities that describe the same people, objects, actions, or gestures. Modality-specific features are specific pieces of information that are unique to each modality, like texture, depth, and visual appearance in images, syntactic structure, vocabulary, and morphology in language. Our goal is to align the modality-shared features of different modalities from different source domains with the same label to be as close as possible in the embedding space, while pushing away features with different labels, to make the embedding space more distinctive. At the same time, we want the modality-specific features to be as far as possible from the modality-shared features, such that they carry different types of information.

We analyzed the meaningfulness and the ability to share features of modality-shared features by cross-modal retrieval task. The ability of cross-modal retrieval is highly related to the shareable of features, as only features that are meaningful, shareable, and highly connective can give a good recall rate.
We use modality-shared features of videos to retrieve from the modality-shared features of audios and calculate the recall, and vice versa for audios. We report the R@1, R@5, and R@10 values for Video to Audio Retrieval and Audio to Video Retrieval. For example, R@5 for Video to Audio Retrieval means we retrieve $5$ candidates from audio, if at least one of them has the same label as the query video, we consider the retrieval success and then we calculate the average success rate for all query videos. We also do the same thing on modality-specific features: using modality-specific features of videos to retrieve from the modality-specific features of audios, and vice versa for audios. As shown in \cref{{tab:retrieval1}} and \cref{{tab:retrieval2}}, we have a very high recall rate when we use modality-shared features, while the recall rate is biased towards random when we use modality-specific features. This further indicates that the modality-shared features are truly shareable and meaningful and there are very strong relations and connections across modalities. The modality-specific features are instead with more information that is private to each single modality. 

\setlength{\tabcolsep}{2pt}
\begin{table}
\begin{minipage}[t]{0.49\columnwidth}
\centering
\begin{adjustbox}{width=0.8\linewidth}

\begin{tabular}{lccc}
\toprule
 &R@1 & R@5 & R@10 \\
 \noalign{\smallskip} \hline \noalign{\smallskip}
Modality-specific Features &  11.80&	45.37	&49.78\\    
Modality-shared Features &  \textbf{76.58} &  \textbf{90.13} & \textbf{92.83}\\ 

\bottomrule
\end{tabular}
\end{adjustbox}

\vspace{+0.05cm}
\caption{Video to Audio Retrieval.}

\label{tab:retrieval1}
\end{minipage}
\hfill
\begin{minipage}[t]{0.49\columnwidth}

\centering
\begin{adjustbox}{width=0.8\linewidth}
\begin{tabular}{lccc}
\toprule
 &R@1 & R@5 & R@10 \\
 \noalign{\smallskip} \hline \noalign{\smallskip}
Modality-specific Features &  10.24	& 33.01& 	47.10\\    
Modality-shared Features &  \textbf{67.59} &  \textbf{89.60} & \textbf{93.99}\\ 
\bottomrule
\end{tabular}
\end{adjustbox}

\vspace{+0.05cm}
\caption{Audio to Video Retrieval.}
\vspace{-1cm}

\label{tab:retrieval2}
\end{minipage}
\end{table}

To further verify the meaningfulness of modality-shared features, we also train a classifier only on the concatenation of modality-shared features of video and audio, and discard the modality-specific features. As shown in \cref{{tab:epic-shared}}, without modality-specific features, the performances drop slightly compared to the whole framework, but are still competitive. This indicates that modality-shared features truly have some meaningful information that can be used for prediction tasks.

\begin{table*}[t!]
%\begin{wraptable}{r}{0.6\textwidth}
\centering
\resizebox{0.9\linewidth}{!}{
\begin{threeparttable}
\begin{tabular}{lccccc}
\toprule
%& \multicolumn{4}{c}{\textbf{EPIC-Kitchen Activity Recognition Across Domains}}\\
%\cmidrule(lr){2-5}  %:)
\textbf{Method} & D2, D3 $\rightarrow$ D1 & D1, D3 $\rightarrow$ D2 & D1, D2 $\rightarrow$ D3  & \textit{Mean}\\
\midrule
SimMMDG (modality-shared features only)  & 54.71& 64.67& 		62.83	 & 60.74 \\
SimMMDG &	57.93 &	65.47 & 66.32 & 		63.24
 \\
\bottomrule
\end{tabular}

\end{threeparttable}
}
%\vspace{-0.5em}
\caption{Multi-modal multi-source DG on EPIC-Kitchens dataset using only modality-shared features.}
%\vspace{-0.8em}
\label{tab:epic-shared}
\end{table*}
%\end{wraptable} 

\end{document}